\pgfplotsset{compat=1.16}
\DeclareMathOperator{\expectsymb}{\mathds{E}}
\DeclareMathOperator{\varsymb}{\mathds{V}}
\DeclareMathOperator*{\argmin}{\mathrm{argmin}}
\newcommand{\indsymb}{\mathds{1}}
\newcommand{\probsymb}{\mathds{P}}
\newcommand{\real}{\mathds{R}}
\newcommand{\natnum}{\mathds{N}}
\let\set\undefined
\NewDocumentCommand{\set}{m}{\left\{#1\right\}}
\NewDocumentCommand{\abs}{m}{\left|#1\right|}
\NewDocumentCommand{\wrapbrackets}{m}{\left[ #1 \right]}
\NewDocumentCommand{\expect}{e{_}o}{
    \def\decorated{\expectsymb\IfNoValueTF{#1}{}{_{#1}}}
    \IfNoValueTF{#2}{\decorated}{\decorated\!\wrapbrackets{#2}}
}
\NewDocumentCommand{\variance}{e{_}o}{
    \def\decorated{\varsymb\IfNoValueTF{#1}{}{_{#1}}}
    \IfNoValueTF{#2}{\decorated}{\decorated\!\wrapbrackets{#2}}
}
\NewDocumentCommand{\trisk}{e{_}o}{
    \def\decorated{\operatorname{R}\!{}^*\IfNoValueTF{#1}{}{_{#1}}}
    \IfNoValueTF{#2}{\decorated}{\decorated\!\wrapbrackets{#2}}
}
\NewDocumentCommand{\terisk}{e{_}o}{
    \def\decorated{\operatorname{\hat{R}}\!{}^*\IfNoValueTF{#1}{}{_{#1}}}
    \IfNoValueTF{#2}{\decorated}{\decorated\!\wrapbrackets{#2}}
}
\NewDocumentCommand{\orisk}{e{_^}o}{
    \def\decorated{\operatorname{R}\IfNoValueTF{#1}{}{_{#1}}\IfNoValueTF{#2}{}{^{#2}}}
    \IfNoValueTF{#3}{\decorated}{\decorated\!\wrapbrackets{#3}}
}
\NewDocumentCommand{\oerisk}{e{_^}o}{
    \def\decorated{\operatorname{\hat{R}}\IfNoValueTF{#1}{}{_{#1}}\IfNoValueTF{#2}{}{^{#2}}}
    \IfNoValueTF{#3}{\decorated}{\decorated\!\wrapbrackets{#3}}
}
\NewDocumentCommand{\ind}{o}{
    \IfNoValueTF{#1}{\indsymb}{\indsymb\!\set{#1}}
}
\NewDocumentCommand{\prob}{o}{
    \IfNoValueTF{#1}{\probsymb}{\probsymb\!\set{#1}}
}
\NewDocumentCommand{\cprob}{mm}{
    \prob[#1\mid#2]
}
\NewDocumentCommand{\defmap}{mmm}{\ensuremath{#1: #2 \longrightarrow #3}}
\NewDocumentCommand{\vect}{m}{\mathbf{#1}}
\NewDocumentCommand{\indvect}{m}{\vect{1}^{\!(#1)}}
\NewDocumentCommand{\allmaps}{mm}{\mathcal{M}(#1, #2)}
\NewDocumentCommand{\pswt}{o}{
    \mathfrak{P}_{\IfNoValueTF{#1}{p}{p_{#1}}}
}
\newcommand{\rademacher}{\operatorname{\mathfrak{R}}}
\newcommand{\lossfn}{f}
\newcommand{\opl}{\lossfn^*_{+}}
\newcommand{\oml}{\lossfn^*_{-}}
\newcommand{\cpl}{\lossfn_{+}}
\newcommand{\cml}{\lossfn_{-}}
\newcommand{\numlabels}{l}
\newcommand{\propensity}{p}
\newcommand{\dataspace}{\mathcal{X}}
\newcommand{\datapvar}{X}
\newcommand{\labelspace}{\mathcal{Y}}
\newcommand{\truelabels}{Y^*}
\newcommand{\obslabels}{Y}
\newcommand{\predlabels}{\hat{y}}
\newcommand{\predictspace}{\hat{\labelspace}}
\newcommand{\functionclass}{\mathcal{H}}
\DeclareRobustCommand\onedot{\futurelet\@let@token\@onedot}
\def\@onedot{\ifx\@let@token.\else.\null\fi\xspace}
\newcommand{\ie}{i.e\onedot}
\newcommand{\eg}{e.g\onedot}
\newcommand{\cf}{cf\onedot}
\begin{document}

\title{Unbiased Loss Functions for Multilabel Classification with Missing Labels}

\author{\name Erik Schultheis \email firstname.lastname@aalto.fi \\
       \addr Dept. of Computer Science
       at Aalto University\\
       Konemiehentie 2, 02150 Espoo
       \AND
       \name Rohit Babbar \email firstname.lastname@aalto.fi \\
       \addr Dept. of Computer Science
       at Aalto University\\
       Konemiehentie 2, 02150 Espoo}

\editor{}

\maketitle

\begin{abstract}%
  This paper considers binary and multilabel classification problems in a
  setting where labels are missing independently and with a known rate.
  Missing labels are a ubiquitous phenomenon in extreme multi-label
  classification (XMC) tasks, such as matching Wikipedia articles to a small
  subset out of the hundreds of thousands of possible tags, where no human
  annotator can possibly check the validity of all the negative samples. For
  this reason, propensity-scored precision -- an unbiased estimate for
  precision-at-k under a known noise model -- has become one of the standard
  metrics in XMC. Few methods take this problem into account already during
  the training phase, and all are limited to loss functions that can be
  decomposed into a sum of contributions from each individual label. A typical
  approach to training is to reduce the multilabel problem into a series of
  binary or multiclass problems, and it has been shown that if the surrogate
  task should be consistent for optimizing recall, the resulting loss function
  is not decomposable over labels. Therefore, this paper derives the unique unbiased estimators for the different multilabel reductions, including the
  non-decomposable ones. These estimators suffer from increased variance and may
  lead to ill-posed optimization problems, which we address by switching to
  convex upper-bounds. The theoretical considerations are further supplemented
  by an experimental study showing that the switch to unbiased estimators
  significantly alters the bias-variance trade-off and may thus require
  stronger regularization, which in some cases can negate the
  benefits of unbiased estimation.


\end{abstract}


\section{Introduction}
Extreme multilabel classification (XMC) is a machine learning setting in which
the goal is to predict a small subset of positive (or relevant) labels for
each data instance out of a very large (thousands to millions) set of possible
labels. Such problems arise for example when annotating large encyclopedia
\citep{dekel2010multiclass, partalas2015lshtc}, in fine-grained image
classification \citep{deng2010does}, and next-word prediction
\citep{mikolov2013efficient}. Further applications of XMC are recommendation
systems, web-advertising and prediction of related searches in a search engine
\citep{Agrawal13,prabhu2014fastxml,jain2019slice,Dahiya21b}.

Typical datasets in these scenarios are very large, resulting in possibly
billions of (data, label) pairs. This means that it is not possible for human
annotators to check each pair, and thus the available training data is likely
to contain some errors. Even annotating only a few samples in order to
generate a clean test set can be prohibitively expensive. Fortunately, in many
cases it is possible to constrain the structure of the labeling errors.
Consider, for example, the case of tagging documents. Here, we can assume that
each label with which the document has been tagged has been deemed relevant by
the annotator, and thus is relatively surely a correct label. On the other
hand, the annotator cannot possibly check hundreds of thousands of negative
labels. This leads to the setting of missing labels investigated in this
paper, where only positive labels are affected by noise (they can go missing),
whereas negative labels remain unchanged (no spurious labels). For a formal
definition of the setting we refer the reader to \autoref{section:setting},
and for a more thorough discussion of prior works on missing labels and related
settings to \autoref{sec:related}.

Many machine learning methods are based on minimizing an expected loss over
the data distribution, typically by using the empirical risk as a statistical
estimator. Thus, a natural extension to the missing-labels setting is to
construct an unbiased estimator of the true risk given noisy data. In the XMC
context, such an approach has been introduced by \citet{Jain16}, who
constructed \emph{propensity-scored} versions for some common loss functions.
They achieve this under the assumption that the probabilities for each label to
not go missing (called its propensity) is known, and developed an empirical
model to estimate these propensities from data statistics. The model assumes
that propensities are identical for every data point (labels go missing
independently of features) and contains two dataset-dependent parameters that
have only been determined and made available for a few benchmark datasets.
Despite these shortcomings, the resulting propensity-scored metrics have found
widespread use in the XMC setting \citep{repo}.

However, many loss functions that are employed for training, such as binary
cross-entropy or the squared-hinge loss, do not fall within the scope of
\citet{Jain16}. Consequently, many methods that use propensity-scored
precision as an evaluation metric still perform training using a loss function
designed for clean-label training
\citep{Dahiya21b,guo_breaking_2019,attentionxml,babbar_data_2019}.

Based on the unbiased estimators given in
\citet{natarajan_cost-sensitive_2017} for the setting of class-conditional
noise, \citet{qaraei_convex_2021} provide unbiased versions for several common
loss functions. Similar to related learning settings
\citep{kiryo_positive-unlabeled_2017,chou_unbiased_2020}, they observed that
the unbiased estimates may be non-convex, non-lower-bounded, and lead to
severe overfitting due to high variance. This paper provides some additional
analysis in the form of a uniqueness result \autoref{thm:unique} that implies
that there are no other unbiased estimates with reduced variance, and a
generalization bound \autoref{thm:genbound} which suggests the bias-variance
trade-off observed in practice. A mitigation strategy is to interpret the loss
functions as convex surrogates of the 0-1 loss, and switch from unbiased
estimates of surrogates to convex surrogates of the unbiased estimate
\citep{qaraei_convex_2021,chou_unbiased_2020}.

Often, XMC problems are formulated as ranking tasks in which the goal is to
maximize either recall or precision within the top-k predictions. Instead of
optimizing this metrics directly, the task is typically reduced to a series of
binary or multiclass problems, with different reductions consistent for either
recall or precision \citet{menon_multilabel_2019}. The reductions consistent
for precision lead to loss functions that can be decomposed into a sum of
contributions from each label which makes them amenable to the methods of
\citet{natarajan_cost-sensitive_2017}. In contrast, the reductions consistent
for recall contain a normalization term that is the inverse of the total
number of true labels. This term is also necessary for calculating the recall
metric itself, demonstrating the need for unbiased estimates for true,
non-decomposable multilabel loss functions.

The unique, unbiased estimate for the generic multilabel case is provided by
\autoref{thm:efficient-multilabel}. This result can be seen as a special case
of \citet[Theorem 5]{van_rooyen_theory_2017}, which states that the unbiased
estimate can be calculated applying the inverse of the adjoint of the
label-corruption operator, a $2^l \times 2^l$ matrix for a problem with $l$
possible labels, to the vector of all $2^l$ possible loss values for a given
prediction. The solution that comes out of our direct computations requires
computation exponential only in the number of observed labels. In large-scale
multilabel problems, the number of relevant labels is typically much smaller
than that of possible labels \citep{Jain16}. If it grows logarithmically, then
our approach requires only $O(l)$ evaluations of the loss function.

We derive these results on the basis of modelling the observed labels as the
product of the true labels and an (unknown) mask variable. Note that this is
different from semi-supervised learning with labelled and unlabelled data, as
the mask is only used as a mathematical convenience, but no knowledge of the
actual mask values is assumed. The advantage of this formulation is that we
can choose a realization of the mask variable such that it is independent of
all other random variables, even though the modeled noise is
class-conditional.

In order to judge the severity of the variance and overfitting problems in
practice, we conducted two experiments. In the first, we conducted two
experiments. First, in a pure evaluation setting, we calculated the unbiased
recall@k for a varying fraction of missing labels, which shows that once this
fraction becomes too large, the variance of the estimate explodes and it
becomes unusable. As a consequence, this metric can only be calculated on
datasets with moderate propensities and many data points. The second
experiment serves as a demonstration for the change in bias-variance trade-off
as a result from switching to unbiased estimates. Here we trained a linear
model with varying $L_2$-regularization using the different versions of the
loss functions. We find that training with noisy labels generally shifts the
trade-off towards higher regularization, and that for the high-variance
unbiased estimates of non-decomposable losses, the original version of the
loss function works better than the unbiased one.

To summarize, the key contributions of this paper are
\begin{enumerate}
    \item The model for missing labels as a product of true labels and an
	independent mask variable (\autoref{def:masking}), which allows for convenient handling of
	expectations in proofs and derivations. We first demonstrate its usefulness
	in the binary case, where we derive new results on unqiueness (\autoref{thm:unique}) and variance
	of the unbiased estimators, and provide a generalization bound (\autoref{thm:genbound}) which is a corrected
	version of \citet[Lemma 8]{natarajan_cost-sensitive_2017}.
    \item The unbiased estimates for general multilabel functions (\autoref{thm:efficient-multilabel}). These can be applied 
    to the normalized loss reductions \citep{menon_multilabel_2019}, which are required for
    training that is consistent for recall@k. It turns out that even the unbiased estimation of recall@k
    becomes a very involved procedure (in constrast to precision@k) that requires summing over contributions from all subsets of the observed labels. 
    \item The investigation of the influence of missing labels and unbiased
    estimates on the bias-variance trade-off. We confirm the intuition given by the generalization bound that 
    training with the unbiased losses can lead to severe overfitting, requiring a re-tuning of regularization, and
    in some cases entirely negating the benefit of unbiased estimation. In situations where they are available,
    convex upper-bounds can be used to mitigate this problem. 
\end{enumerate}

\section{Notation and Setting}
\label{section:setting}

In this paper, random variables will be denoted by capital letters $X,
Y, \ldots$, whereas calligraphic letters denote sets and lower case
letters their elements, $x \in \mathcal{X}, \ldots$. Vectors will be denoted
by bold font, $\mathbf{y} \in \mathcal{Y}$, if we plan to make use of the fact
that they can be decomposed into components $y_1, \ldots, y_k$. 
The letters $f$, $g$, and $h$ are reserved for functions, $i$, $j$, $k$ denote integers. With $[k]$ we
denote the set $\set{1, \ldots, k}$, and $\allmaps{\mathcal{A}}{\mathcal{B}}$
is the set of all measurable mappings from $\mathcal{A}$ to $\mathcal{B}$.

There are two natural ways to express a multilabel data point for
$\numlabels$ possible labels: Either as vectors from $\set{0,1}^\numlabels$ or as subsets of $[\numlabels]$. We will mostly use the
former, and thus set $\labelspace = \set{0,1}^\numlabels$. In cases where the
subset notation is convenient, we use the symbol $\mathcal{I}(\vect{v})
\coloneqq \set{i \in [\numlabels]: v_i = 1}$ to denote conversion from subset
to vector representation, and $\indvect{\mathcal{I}(v)} = v$ for the reverse
operation.

Throughout this paper, we assume an abstract probability space $(\Omega,
\mathcal{F}, \prob)$. We further denote with $\dataspace$ the 
\emph{data space}, $\labelspace$ the \emph{label space} and
$\hat{\mathcal{Y}}$ the \emph{prediction space}. A dataset is defined through
the three random variables $\datapvar \in \dataspace$, $\vect{\obslabels} \in
\labelspace$, and $\vect{\truelabels} \in \labelspace$, that represent the
\emph{data}, \emph{observed label}, and \emph{ground truth label}. We will
generally mark quantities pertaining to the unobservable ground-truth with a
superscript star and call $(\datapvar, \vect{\truelabels})$ the \emph{clean data}.

For a given loss function $\defmap{f}{\labelspace \times
\predictspace}{\real}$, we denote the true and observed risks of a classifier $\defmap{h}{\dataspace}{\predictspace}$ as
\begin{align*}
    \trisk_f[h] \coloneqq \expect[f(\vect{\truelabels}, h(\datapvar))], \orisk_f[h] &\coloneqq \expect[f(\vect{\obslabels}, h(\datapvar))],
\end{align*}
and mark their empirical counterparts as $\terisk_f$ and $\oerisk_f$.

In this paper, we are interested in noisy labels where the noise is such that
labels can only go missing. This is described by the next two definitions,
where the first gives a phenomenological characterization of the setting,
whereas the second defines the mathematical model we use to describe the
setting.

\begin{definition}[Propensity]
\label{def:propensity}
The missing-labels setting we described informally in the introduction leads to the following
conditions on the $\numlabels$ random variables
    \begin{align}
        \cprob{\obslabels_j=1}{\truelabels_j=1, \datapvar}   &\eqqcolon \propensity_j   & & \text{labels may go missing,} \label{eq:def_prop} \\
        \cprob{\obslabels_j=1}{\truelabels_j=0, \datapvar}   &= 0                       & & \text{but no spurious labels.}\label{cond:nofp}
    \end{align}
    The value $\propensity_j \in (0, 1]$ is called the \emph{propensity} of the
    label $j$.
\end{definition}
In the model of \citet{Jain16}, for a dataset of $n$ points in which the label $j$ 
occurs $n_j$ times, and given dataset dependent constants $a, b$ and $c \coloneqq (\log n -1)(b+1)^a$ the propensity is determined by
\begin{equation}
    \propensity_j = (1 + c \exp(-a \log(n_j + b))^{-1}.
\end{equation}

\begin{definition}[Masking Model]
    \label{def:masking}
    The relation between $\vect{\truelabels}$ and $\vect{\obslabels}$ can be
    modelled by a set of \emph{mask} random variables $\vect{M} \in
    \set{0,1}^l$ such that $\vect{\obslabels} = \vect{M} \odot
    \vect{\truelabels}$. We require that the masks be independent of data and
    labels.
\end{definition}
This might seem like a restriction compared to \citet{Jain16}, as they
do not appear to make any independence assumptions. However, note that their
theorems require fixed predictions, which in our notation corresponds to
constant $\datapvar$, such that the independence of $\datapvar$ is implicitly
fulfilled. Further, they assume per-example knowledge of propensities, \ie
given a dataset $((x_1, \vect{y_1}), \ldots, (x_n, \vect{y_n}))$ they assume
knowledge of an entire propensity matrix $\propensity_{ij}$, $i \in [n], j \in
[l]$, \ie the propensity needs to be known for each instance $i$ and label $j$. 
Their empirical model for estimating propensities only produces
per-dataset values, resulting in the implicit assumption that
$\propensity_{ij} = p_{kj}$ $\forall i, k \in [n], j \in [l]$. 

To fulfill the conditions of our one-sided class-conditional noise model, we require 
\begin{align}
    \cprob{\obslabels_j=1}{\truelabels_j=1} &= \cprob{\truelabels_j=1, M_j=1}{\truelabels_j=1} = \cprob{M_i=1}{\truelabels_j=1} \stackrel{!}{=} \propensity_j\\
    \cprob{\obslabels_j=1}{\truelabels_j=0} &= \cprob{\truelabels_j=1, M_j=1}{\truelabels_j=0} = 0,
\end{align}
where we have left out the conditioning on $\datapvar$ for notational convenience.
Thus we can choose $\vect{M}$ such that
\begin{equation}
     \propensity_j = \cprob{M_j=1}{\truelabels_j=1} = \cprob{M_j=1}{\truelabels_j=0} = \prob[M_j=1] = \expect[M_j],
\end{equation} 
which means that $M_j$ is independent of the vector of true labels $\vect{\truelabels}$ and data point $\datapvar$.

\section{Unbiased Estimation for Binary Losses}
\label{sec:theory}


In this section, we first derive a general solution for the unbiased
estimation of binary losses and show its uniqueness. Furthermore, we prove a
generalization bound that demonstrates a bias-variance trade-off between the
unbiased and the original loss. Finally, we show how the binary results can
be applied to multilabel cases where the loss decomposes, which corresponds
to the \emph{one-vs-all} (OvA) and \emph{pick-all-labels}
(PAL) reductions \citep{menon_multilabel_2019}.

The technique used here relies heavily on two properties of the problem:
\begin{itemize}
  \item The label space is discrete, so we can rewrite the loss function 
  as a sum over contributions for each individual label setting.
  \item The independence properties of $\vect{M}$, which allows replacing 
  expectations over products containing $M_i$ by a multiplication with $\propensity_i$. 
\end{itemize}

\subsection{Derivation of Unbiased Binary Losses}
As the considerations here are limited to the binary case, we will drop the
vector notation and corresponding subscripts for the labels. Note that binary
is to be understood in the sense of detecting the presence of absence of some
label ("is there a dog in this picture?"), not as a decision between two
classes ("does this picture show a dog or a cat?").

First, we define a \emph{propensity-scoring} operator that maps
a function to a surrogate function that can be used to compensate 
for missing labels, and prove that this does lead to unbiased estimates.

\begin{definition}[PS Operator] 
    \label{def:ps-operator}
    Let $\mathcal{Z}$ be an arbitrary set$, \mathcal{V}$ a vector 
    space, and $\defmap{f^*}{\set{0,1} \times \mathcal{Z}}{\mathcal{V}}$ be 
    some function. Since the first argument can only take the two different 
    values 0 and 1, we can decompose
    \begin{gather}
        f^*(y, z) \eqqcolon y \opl(z) + (1-y) \oml(z).
    \intertext{Then, for any $p \in (0, 1]$, we call $\defmap{f}{\{0,1\} \times \mathcal{Z}}{\mathcal{V}}$ defined as}
        f(y, z) \coloneqq y\frac{\opl(z) + (p-1) \oml(z)}{p} + (1-y) \oml(z)
    \end{gather}
    a \emph{propensity-scored} version of $f^*$, and the 
    mapping $\pswt: f^* \mapsto f$ the \emph{propensity-scoring (PS) operator}
    for propensity $p$.
\end{definition}
Note that, in general, convexity and non-negativeness of $f^*$ need not result
in convexity and non-negativeness of $f$, \cf section \ref{ssec:bce}. The PS
operator results in unbiased estimates, as shown below:

\begin{theorem}[Unbiased Estimates with Missing Binary Labels]
    \label{thm:single_label}
    Assuming the masking model, let $k \in \natnum$ and $\defmap{f^*}{\{0,1\} \times \mathcal{X}}{\real^k}$,
    then $f \coloneqq \pswt(f^*)$ allows to calculate an unbiased estimate of $f^*$ by
    \begin{equation}
        \expect[f^*(Y^*, \datapvar)] = \expect[f(Y, \datapvar)].
    \end{equation}
\end{theorem}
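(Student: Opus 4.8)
The plan is to push the expectation inward by conditioning on the clean data $(\datapvar, \truelabels)$ and applying the tower property,
\[
\expect[f(\obslabels, \datapvar)] = \expect[\expect[f(\obslabels, \datapvar) \mid \datapvar, \truelabels]] .
\]
It then suffices to establish the pointwise identity $\expect[f(\obslabels, \datapvar) \mid \datapvar, \truelabels] = f^*(\truelabels, \datapvar)$, since taking the outer expectation of this identity immediately produces $\expect[f^*(\truelabels, \datapvar)]$ and closes the argument.

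To evaluate the inner conditional expectation I would invoke \autoref{def:masking}: in the binary case $\obslabels = M\truelabels$, where $M \in \{0,1\}$ is independent of $(\datapvar, \truelabels)$ with $\prob[M=1] = p$. Fixing $\datapvar = x$ and $\truelabels = y^*$, split on the value of $y^*$. If $y^* = 0$, then $\obslabels = 0$ almost surely, so $f(\obslabels, x) = f(0,x) = \oml(x) = f^*(0,x) = f^*(y^*, x)$ by the decompositions of \autoref{def:ps-operator}. If $y^* = 1$, then $\obslabels = M$, so the conditional expectation equals $p\, f(1,x) + (1-p)\, f(0,x) = p \cdot \frac{\opl(x) + (p-1)\oml(x)}{p} + (1-p)\oml(x)$; the factor $p$ cancels and the two $\oml$ contributions collapse, leaving exactly $\opl(x) = f^*(1,x) = f^*(y^*, x)$. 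In either case the inner expectation equals $f^*(\truelabels, \datapvar)$, as required.

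A few remarks on why this is airtight and where the (small) care is needed. Since $f^*$ --- and hence $f = \pswt(f^*)$ --- takes values in $\real^k$, the whole argument is read componentwise; linearity of expectation and the case analysis are insensitive to the codomain. One can also bypass conditioning and compute directly, writing $f(\obslabels, \datapvar) = \obslabels\, f(1,\datapvar) + (1-\obslabels)\, f(0,\datapvar)$, substituting $\obslabels = M\truelabels$, and using $\expect[M\, g(\truelabels, \datapvar)] = p\, \expect[g(\truelabels, \datapvar)]$ from independence; the same cancellations appear. I do not expect any genuine obstacle --- the claim is essentially immediate once the PS operator is unfolded --- and the only step that demands attention is that the independence in \autoref{def:masking} must be used in its joint form, i.e.\ $M$ independent of the pair $(\datapvar, \truelabels)$ rather than merely of each marginally, so that $\expect[M \mid \datapvar, \truelabels] = \prob[M=1] = p$. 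This is precisely why the masking reformulation at the end of \autoref{section:setting} was set up so that $M$ is simultaneously independent of $\truelabels$ and $\datapvar$; everything else is routine bookkeeping.
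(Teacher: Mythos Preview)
Your proof is correct and essentially matches the paper's approach: the paper performs the direct computation you sketch in your final paragraph, writing $f(\obslabels,\datapvar)=\obslabels\,\cpl(\datapvar)+(1-\obslabels)\,\cml(\datapvar)$, substituting $\obslabels=M\truelabels$, and factoring out $\expect[M]=p$ via independence. Your primary presentation via conditioning and case analysis on $\truelabels$ is a slight repackaging of the same argument and arguably makes the pointwise identity $\expect[f(\obslabels,\datapvar)\mid\datapvar,\truelabels]=f^*(\truelabels,\datapvar)$ more explicit, but the underlying mechanism is identical.
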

\begin{proof}
    Using the linearity of the expectation and independence of $M$, we can calculate
    \begin{align*}
         \MoveEqLeft
        \expect[f(\obslabels, \datapvar)] = \expect[M \truelabels \cpl(\datapvar) + (1-M \truelabels) \cml(\datapvar) )] \\
        &= \expect[M] \expect[\truelabels (\cpl(X) - \oml(X))] + \expect[\oml(X)] \label{eq:thm-single-label-indep} \\
        &= 
            p \expect[\truelabels (p^{-1} \left(\opl(X) + (p-1) \oml(X) \right)- \oml(X)] + \expect[\oml(X)]
         \\
        &= \expect[\truelabels \left(\opl(X) - \oml(X) \right)] + \expect[\oml(X)] \\
        &= \expect[f^*(\truelabels, X)]. & \hspace{2em} \qedhere
    \end{align*}
\end{proof}

This theorem corresponds to \citet[Lemma 7]{natarajan_cost-sensitive_2017}. In practice,
the argument to the loss function is a prediction given by some classifier $\phi$, so
what we need is the following
\begin{restatable}[Binary Loss Function's Gradient]{corollary}{thmbingrad}
    \label{col:binary_grad}
    Let $\defmap{\phi}{\mathcal{X} \times \real^k}{\real}$  be a binary classifier
    with $k \in \natnum$ parameters, and $\defmap{\ell^*}{\set{0, 1} \times
    \real}{\real}$ be a loss function, $\ell = \pswt(\ell^*)$. Assume that the mask $M$ is independent of $(\datapvar, \truelabels, \vect{W})$. Then 
    \begin{align*}
        \expect[\ell^*(\truelabels, \phi(\datapvar, \vect{W}))] &= \expect[\ell(\obslabels, \phi(\datapvar, \vect{W}))] \\
        \expect[\nabla_{\vect{W}} \ell^*(Y^*, \phi(\datapvar, \vect{W}))] &= \expect[\nabla_{\vect{W}} \ell(Y, \phi(\datapvar, \vect{W}))].
    \end{align*} 
\end{restatable}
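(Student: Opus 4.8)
The plan is to reduce \autoref{col:binary_grad} to \autoref{thm:single_label} by exhibiting, for each fixed parameter vector $\vect{W}$, a suitable function $f^*$ to which the theorem applies. Concretely, for the first identity I would define $\defmap{f^*}{\set{0,1}\times\dataspace}{\real}$ by $f^*(y, x) \coloneqq \ell^*(y, \phi(x, \vect{W}))$; since $\ell^*$ decomposes as $\ell^*(y,\cdot) = y\,\opl(\cdot) + (1-y)\oml(\cdot)$ (here $\ell^*$ playing the role of the abstract $f^*$ in \autoref{def:ps-operator}), composition with $\phi(\cdot,\vect{W})$ preserves this decomposition with $f^*_+(x) = \opl(\phi(x,\vect{W}))$ and $f^*_-(x) = \oml(\phi(x,\vect{W}))$. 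Then $\pswt(f^*)(y,x)$ is exactly $\ell(y, \phi(x,\vect{W})) = \pswt(\ell^*)(y,\phi(x,\vect{W}))$ by inspection of the defining formula, so \autoref{thm:single_label} (with $k=1$) yields $\expect[\ell^*(\truelabels,\phi(\datapvar,\vect{W}))] = \expect[\ell(\obslabels,\phi(\datapvar,\vect{W}))]$.

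For the gradient identity I would apply the same reduction with the vector-valued function $\defmap{g^*}{\set{0,1}\times\dataspace}{\real^k}$, $g^*(y,x) \coloneqq \nabla_{\vect{W}}\bigl[\ell^*(y, \phi(x,\vect{W}))\bigr]$. The point is that differentiation in $\vect{W}$ commutes with the decomposition over the discrete first argument: writing $\ell^*(y,\phi(x,\vect{W})) = y\,\opl(\phi(x,\vect{W})) + (1-y)\oml(\phi(x,\vect{W}))$ and differentiating term by term shows $g^*(y,x) = y\,\nabla_{\vect{W}}\opl(\phi(x,\vect{W})) + (1-y)\nabla_{\vect{W}}\oml(\phi(x,\vect{W}))$, i.e. $g^*$ again has the form required by \autoref{def:ps-operator} with $g^*_+(x) = \nabla_{\vect{W}}\opl(\phi(x,\vect{W}))$ and $g^*_-(x) = \nabla_{\vect{W}}\oml(\phi(x,\vect{W}))$. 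Moreover $\pswt(g^*)$ equals $\nabla_{\vect{W}}\bigl[\pswt(\ell^*)(y,\phi(x,\vect{W}))\bigr] = \nabla_{\vect{W}}\ell(\obslabels,\phi(x,\vect{W}))$ because the PS operator only takes linear combinations of $g^*_+$ and $g^*_-$ with coefficients depending on $y$ and $p$ (not on $\vect{W}$), and those linear combinations commute with $\nabla_{\vect{W}}$. Invoking \autoref{thm:single_label} for $\real^k$-valued functions then gives the claimed equality of expected gradients.

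The main obstacle is not algebraic but measure-theoretic: one must justify that $(X, Y)$ in the composed problem still satisfies the hypotheses of \autoref{thm:single_label}, in particular that the mask $M$ is independent of the "data" variable — which here is effectively $(\datapvar, \vect{W})$ rather than $\datapvar$ alone. This is exactly why the corollary's statement strengthens the standing assumption of \autoref{def:masking} to "$M$ is independent of $(\datapvar, \truelabels, \vect{W})$"; with that in hand, the role of $\datapvar$ in \autoref{thm:single_label} is played by the pair $(\datapvar,\vect{W})$ (or, if $\vect{W}$ is deterministic, the independence is immediate), and the proof of \autoref{thm:single_label} goes through verbatim with $\expect[M\mid \datapvar, \vect{W}] = p$. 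A secondary technical point is interchanging $\nabla_{\vect{W}}$ with $\expect$; I would note that this is permissible under the usual domination/smoothness assumptions on $\ell^*$ and $\phi$ that are tacitly in force whenever one speaks of gradient-based training, and that in any case the identity $\expect[g^*(\truelabels,\datapvar)] = \expect[\pswt(g^*)(\obslabels,\datapvar)]$ holds at the level of the gradient functions themselves, independently of whether one wishes to pull the gradient outside the expectation.
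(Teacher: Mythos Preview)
Your proposal is correct and follows essentially the same route as the paper: both arguments reduce to \autoref{thm:single_label} by taking the data variable to be the pair $(\datapvar,\vect{W})$, defining $g^*(y,(x,\vect{w})) = \nabla_{\vect{w}}\ell^*(y,\phi(x,\vect{w}))$, and then verifying that $\pswt(g^*) = \nabla_{\vect{W}}\ell$ via the linearity of differentiation and of the PS operator. Your remark on the dominated-convergence caveat for swapping $\nabla_{\vect{W}}$ and $\expect$ is a welcome addition that the paper leaves implicit.
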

\begin{proof}
    The independence of $\datapvar, \vect{W}$ and $M$ implies the independence
    of $\phi(\datapvar; \vect{W})$ and $M$. For the second equation, apply
    \autoref{thm:single_label} with $\dataspace^\prime = \dataspace \times
    \real^k$ to the gradient of the loss function and use linearity of the
    $\pswt$ operator, expectation, and gradient. For details, see appendix
    \ref{proof:corbingrad}.
\end{proof}

\subsection{Examples}
\paragraph{Squared Error}
\label{ssec:SE}
\begin{wrapfigure}[19]{r}{0.4\linewidth}
    \begin{tikzpicture}
    \begin{axis}[
        width=\linewidth,
        xlabel={$\hat{y}$},
        ylabel={loss},
        ylabel shift={-1em},
        xmin=0, xmax=2.0, ymax=3.0, ymin=-1.0,
        very thick,
        domain=0:2.0,
        samples=50
    ]
    \addplot[blue,dashed] {1/0.66*(1-2*x) + x*x};
    \addplot[blue] {0/0.66*(1-2*x) + x*x};
    
    \end{axis}
    \end{tikzpicture}
    \vspace{-1em}
    \caption{
    Propensity-scored squared error loss. The dashed lines 
    denote the loss for $y=1$, the solid ones for $y=0$. The propensity was chosen as $p=0.66$.
    }
    \label{fig:overfitting-se}
\end{wrapfigure}
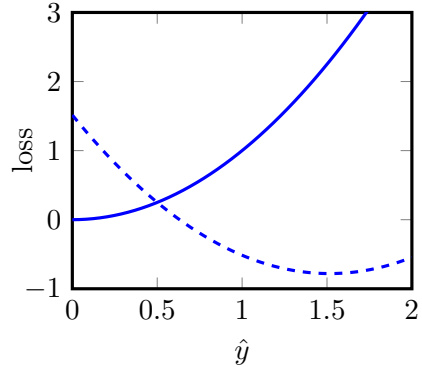

For the squared error $f^*_{\mathrm{SE}}(y, \hat{y}) = (y - \hat{y})^2$ the unbiased estimate is
\begin{equation}
    f_{\mathrm{SE}}(y, \hat{y}) = \frac{y}{p} \left( 1 - 2 \hat{y}\right) + \hat{y}^2,
\end{equation}
where we used $y^2=y$ because $y \in \set{0, 1}$. In case the label is present
($y=1$), this is minimized for $\hat{y} = \frac{y}{p}$. The minimum is outside
the interval $[0, 1]$, as shown in \autoref{fig:overfitting-se}.

\paragraph{Binary Cross-Entropy}
\label{ssec:bce}
The BCE loss is given by
\begin{equation}
    f^*_{\mathrm{BCE}}(y, \hat{y}) = -y\log(\hat{y}) - (1-y)\log(1 - \hat{y}).
\end{equation}
Since this is linear in $y$, we can directly write down the unbiased estimate as
\begin{equation}
    f_{\mathrm{BCE}}(y, \hat{y}) = -\frac{y}{p} \log(\hat{y}) - \left(1-\frac{y}{p} \right)\log(1 - \hat{y}). \label{eq:ps-bce}
\end{equation}
Note that, since the BCE loss is not upper-bounded, the unbiased BCE loss is
not lower-bounded, which may be bad from an optimization perspective.

\subsection{Properties of the Unbiased Estimator}
Even though the unbiased estimate guarantees to give correct results in the
case of unlimited data, it is not directly clear how helpful it is in the
finite data regime. In the related problems of PU-learning and learning from
complementary labels, unbiased estimators are known but they have been found
to be problematic in practice
\citep{kiryo_positive-unlabeled_2017,chou_unbiased_2020}.

A first problem is that even if the original loss has been chosen as a convex
function, the unbiased estimator may not be convex. A sufficient condition for
convexity is given in \citet[Lemma 10]{natarajan_cost-sensitive_2017}. In
fact, if the original loss is not upper-bounded (\eg hinge-loss, logistics
loss, squared loss), then the unbiased estimate may not even be lower-bounded,
thus rendering the optimization-problem ill-defined. A way of addressing this
problem is to forgo the unbiasedness and switch to convex surrogate losses as
discussed in \autoref{sec:upper-bounds}.

\paragraph{Variance}
A second potential problem with unbiased estimators is their variance.  In the
regime of $\propensity \rightarrow 0$, we can show that in the binary case the
variance grows with $\propensity^{-1}$ compared to the evaluation on clean
data. For the noiseless case, the variance is given by
\begin{align}
    \variance[f^*(\hat{y}, \truelabels)] &= \variance[\truelabels\opl(\hat{y})+ (1-\truelabels) \oml(\hat{y})] \nonumber\\
    &= \variance[\truelabels]\left( \opl(\hat{y}) - \oml(\hat{y}) \right)^2 \\
\shortintertext{whereas the noisy estimator has a variance of}
    \variance[f(\hat{y}, \obslabels)] &= \variance[\obslabels\frac{\opl(\hat{y}) + (\propensity-1) \oml(\hat{y})}{\propensity} + (1-\obslabels) \oml(\hat{y})] \nonumber \\
    &= \variance[\obslabels] \left( \frac{\opl(\hat{y}) + (\propensity-1) \oml(\hat{y})}{\propensity} \right)^2 + \variance[\obslabels] \oml(\hat{y})^2 \nonumber \\
    &= \variance[\obslabels] \frac{\left( \opl(\hat{y}) + (\propensity-1) \oml(\hat{y}) \right)^2 +  \propensity^2\oml(\hat{y})^2}{\propensity^2}.\\ 
\shortintertext{For propensities much smaller than 1, this can be approximated by (recall that $q \coloneqq \expect[\obslabels]$)}
     &\approx \variance[\obslabels] \frac{ \left( \opl(\hat{y}) - \oml(\hat{y}) \right)^2 }{\propensity^2}
    = q(1-q) \frac{ \left( \opl(\hat{y}) - \oml(\hat{y}) \right)^2 }{\propensity^2}.
\end{align}
Setting $q^* \coloneqq \expect[\truelabels] = q/p$, and using $1-q = 1 - p q^* \approx 1$ we get
\begin{equation}
    \variance[f(\hat{y}, \obslabels)] \approx \frac{q(1-q)}{\propensity^2} \frac{ \variance[f^*(\hat{y}, \truelabels)]}{q^* (1-q^*)} 
    \approx \frac{1}{\propensity(1-q^*)} \variance[f^*(\hat{y}, \truelabels)],
\end{equation}
which means that the variance increases linearly with inverse propensity.

\paragraph{Generalization}
The preceding argument indicates that there might be a bias-variance trade-off
between using the unbiased loss that may overfit more strongly on the observed
noise, and using the original loss function which gives wrong results even if
$n \rightarrow \infty$. A first step toward understanding this is to determine
upper bounds on the generalization errors (proof in appendix \ref{proof:thmgenbound}):
\begin{restatable}[Generalization bounds]{theorem}{thmgenbound}
    \label{thm:genbound}
    Let $\mathcal{H}$ be a function class with Rademacher complexity
    $\rademacher_n(\mathcal{H})$. Let $\defmap{f^*}{\labelspace \times
    \predictspace}{[a, b]}$ for $a < b \in \real$ be a loss function that is
    $\rho-$Lipschitz continuous in its second argument. Let $f \coloneqq \pswt f^*$ and denote
    \begin{gather}
        r^{\star} \coloneqq \inf_{h \in \mathcal{H}} \trisk_{f^*}[h], \quad
        \hat{h} \coloneqq \argmin_{h \in \mathcal{H}} \oerisk_{f}[h], \quad
        \tilde{h} \coloneqq \argmin_{h \in \mathcal{H}} \oerisk_{f^*}[h].
    \shortintertext{as well as}
    \begin{aligned}
        c &\coloneqq \rho \rademacher_n(\mathcal{H}) + (b-a) \sqrt{\frac{\log(2/\delta)}{2n}}\\
        m &\coloneqq \sup_{x \in \dataspace}(\abs{f^*(1, x) - f^*(0, x)}).
    \end{aligned}
    \end{gather}
    For a given sample of $n$ points, it holds with probability at least $1-\delta$
    \begin{align}
        \hat{r} &\coloneqq \trisk_{f^*}[\hat{h}] &{}\leq{}& r^{\star}&& &{}+{}& 2 \frac{2-p}{p} c \\ 
        \tilde{r} &\coloneqq \trisk_{f^*}[\tilde{h}]  &{}\leq{}& r^{\star} + &q \frac{1 - p}{p} m& &{}+{}& 2 c,
    \end{align}
    where $q \coloneqq \expect[\obslabels] \leq 1$.
\end{restatable}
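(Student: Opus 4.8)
\emph{Overall strategy.} The plan is to obtain both lines from the classical Rademacher‑complexity bound for empirical risk minimization. The only subtlety for $\hat h$ is that the object actually minimized on the noisy sample is $\oerisk_f$ with $f = \pswt f^*$, whose range and Lipschitz constant are inflated relative to those of $f^*$; for $\tilde h$ the subtlety is that minimizing $\oerisk_{f^*}$ targets the \emph{observed} risk $\orisk_{f^*}$ rather than $\trisk_{f^*}$, so one must account for the gap between the two.

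\emph{The bound for $\hat h$.} Decompose $f^*(y, z) = y\opl(z) + (1-y)\oml(z)$ with $\opl(z), \oml(z) \in [a, b]$; by definition of $\pswt$, $f(0, z) = \oml(z)$ and $f(1, z) = \bigl(\opl(z) + (p-1)\oml(z)\bigr)/p$. A direct check then shows that $f$ takes values in an interval of width $\tfrac{2-p}{p}(b-a)$ and is $\tfrac{2-p}{p}\rho$‑Lipschitz in its second argument, so the quantity playing the role of $c$ for $f$ is exactly $\tfrac{2-p}{p}c$. By \autoref{thm:single_label} (equivalently \autoref{col:binary_grad}), $\orisk_f[h] = \trisk_{f^*}[h]$ for every $h$, so $\oerisk_f$ is an unbiased empirical proxy for $\trisk_{f^*}$. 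Writing $D \coloneqq \sup_{h \in \mathcal H}\abs{\oerisk_f[h] - \orisk_f[h]}$ and letting $h^{\star}$ (nearly) attain $r^{\star}$, the standard chain
\[
    \trisk_{f^*}[\hat h] = \orisk_f[\hat h] \le \oerisk_f[\hat h] + D \le \oerisk_f[h^{\star}] + D \le \orisk_f[h^{\star}] + 2D = r^{\star} + 2D
\]
reduces everything to bounding $D$, which follows from symmetrization, Talagrand's contraction lemma (with the Lipschitz constant above) and McDiarmid's inequality (with the range above): with probability at least $1 - \delta$, $D \le \tfrac{2-p}{p}c$. This yields the first inequality.

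\emph{The bound for $\tilde h$.} Run the same ERM chain with $f^*$ on the noisy sample; since $\oerisk_{f^*}$ is unbiased for $\orisk_{f^*}$, and $f^*$ has range $b-a$ and Lipschitz constant $\rho$, this gives $\orisk_{f^*}[\tilde h] \le \orisk_{f^*}[h^{\star}] + 2c$ with probability at least $1-\delta$. To pass from $\orisk_{f^*}$ to $\trisk_{f^*}$, invoke the masking model: $f^*(\truelabels, h(X)) - f^*(\obslabels, h(X))$ is nonzero only on the event $\{\truelabels = 1, M = 0\}$, where it equals $\opl(h(X)) - \oml(h(X))$, so by independence of $M$,
\[
    \trisk_{f^*}[h] - \orisk_{f^*}[h] = (1 - p)\,\expect[\truelabels(\opl(h(X)) - \oml(h(X)))],
\]
whose absolute value is at most $(1-p)\expect[\truelabels]\,m = q\tfrac{1-p}{p}m$ after substituting $\expect[\obslabels] = p\,\expect[\truelabels]$. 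Inserting this at $h = \tilde h$ and at $h = h^{\star}$ and collecting the deterministic (bias) and stochastic ($2c$) contributions yields the second inequality.

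\emph{Main obstacle.} The $f$‑side and the Rademacher machinery are routine. The delicate point — and, I expect, precisely what is being corrected relative to \citet[Lemma 8]{natarajan_cost-sensitive_2017} — is the bias bookkeeping for $\tilde h$: a purely triangle‑inequality treatment charges the bias term both at $\tilde h$ and at $h^{\star}$, which would inflate the deterministic penalty; obtaining the single factor $q\tfrac{1-p}{p}m$ forces one to exploit the one‑sidedness of the noise (only positives can be lost) and the interplay between the size of $\trisk_{f^*}[\tilde h] - \orisk_{f^*}[\tilde h]$ and the excess observed risk $\orisk_{f^*}[\tilde h] - r^{\star}$, rather than estimating those two pieces independently.
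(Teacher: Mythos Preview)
Your $\hat h$ argument is correct and essentially the paper's; the only cosmetic difference is that you package both deviation directions into a single two-sided uniform bound $D$ over $\mathcal H$, whereas the paper applies its concentration lemma (\autoref{thm:cbound-binary}) once to $\mathcal H$ and once to the singleton $\{h^\star\}$ (which has zero Rademacher complexity), then union-bounds.

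For $\tilde h$ there is a genuine gap. Your chain yields $\orisk_{f^*}[\tilde h]\le \orisk_{f^*}[h^\star]+2c$, and converting $\orisk_{f^*}$ to $\trisk_{f^*}$ at \emph{both} endpoints costs $2q\tfrac{1-p}{p}m$, not one---as you yourself note in the last paragraph. You flag the obstacle but do not resolve it; the vague appeal to ``one-sidedness'' does not supply an argument for dropping one of the two bias charges. The paper's route is different: it pivots through the \emph{empirical} observed risk rather than the population one,
\[
\trisk_{f^*}[\tilde h]-\trisk_{f^*}[h^\star]
=\underbrace{\trisk_{f^*}[\tilde h]-\oerisk_{f^*}[\tilde h]}_{a}
+\underbrace{\oerisk_{f^*}[\tilde h]-\oerisk_{f^*}[h^\star]}_{\le 0}
+\underbrace{\oerisk_{f^*}[h^\star]-\trisk_{f^*}[h^\star]}_{c}.
\]
For term $a$ the paper writes $\trisk_{f^*}[\tilde h]=\orisk_f[\tilde h]$ (unbiasedness) and then invokes \autoref{lemma:lossriskdiff}, which bounds $\expect[\lvert f(Y,X)-f^*(Y,X)\rvert]\le q\tfrac{1-p}{p}m$, to pass from $\orisk_f[\tilde h]$ to $\orisk_{f^*}[\tilde h]$; this is the single place the bias term enters. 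The residual $\orisk_{f^*}[\tilde h]-\oerisk_{f^*}[\tilde h]$ in $a$, and all of term $c$, are treated by concentration alone (\autoref{thm:cbound-binary} with $p=1$). So the single bias factor comes from this specific decomposition together with \autoref{lemma:lossriskdiff}, not from a sign argument about one-sided noise.

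A minor correction on your closing remark: what the paper fixes relative to \citet[Lemma~8]{natarajan_cost-sensitive_2017} is a missing range-of-loss factor in the concentration term (see the discussion following \autoref{thm:cbound-binary}), not the bias bookkeeping.
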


Given a hypotheses class $\mathcal{H}$, the (expected) \emph{Rademacher complexity} for $n$ sample points is defined by
\begin{equation}
    \rademacher_n(\mathcal{H}) = \expect[\sup_{h \in  \mathcal{H}} \sum_{i=1}^n \sigma_i h(X_i)],
\end{equation}
where $\sigma_i \in \set{-1, 1}$ are independent Rademacher variables (cf. \citet[Ch. 26]{shalev2014understanding}).

The bound on the original loss function has, in addition to the Bayes and
approximation errors error $r^\star$, a second bias term $q \frac{1 - p}{p} m$
that does not decrease as $n$ increases. On the other hand, the unbiased
estimation introduces a factor of $\frac{2-p}{p}$ in front of the
sample-size-dependent term. This indicates that, if only few training samples
are available, the ERM of the unbiased loss might result in a worse
classifier.

\paragraph{Uniqueness}
The results above raise the question whether there might be other unbiased
estimators with reduced variance and better generalization performance. For
example, the conditional expectation $\expect[f^*(Y^*, X) | Y]$ also gives an
unbiased estimate with lower variance, but cannot be calculated without
knowledge of the conditional probabilities $\cprob{Y}{X}$. The following
theorem states that $\pswt$ is essentially unique if the marginal probability
of the label is known beforehand, and completely unique otherwise, and thus we
cannot reduce the variance.

\begin{restatable}[Uniqueness]{theorem}{thmpsuniqueness}
    \label{thm:unique}
     Let $|\dataspace| \geq 2$, $p \in (0, 1]$, $q \in (0, p]$, and
     $\mathcal{F} = \allmaps{\{0,1\} \times \mathcal{X}}{\real}$ be a set of
     functions. Let $\defmap{\mathfrak{P}}{\mathcal{F}}{\mathcal{F}}$ be an
     operator that maps a function to an unbiased estimate in the missing
     labels setting, such that for all $X, Y, Y^*$ that fulfill the masking
     model with propensity $p$ and marginal $q$, it holds
     \begin{equation}
         \forall f^* \in \mathcal{F}: \expect[f^*(Y^*, X)] = \expect[\mathfrak{P}(f^*)(Y, X)]. \label{eq:unbiased-operator}
     \end{equation}
     Then, $\mathfrak{P}$ is related to the propensity scoring operator $\pswt$ by
     \begin{equation}
         \mathfrak{P}(f^*)(y, x) = \pswt(f^*)(y, x) + (y - q) \gamma
     \end{equation}
     for some $\gamma \in \real$. Since this does not affect the dependency on
     $x$, from a learning perspective, the propensity-scoring operator $\pswt$
     is essentially unique.
\end{restatable}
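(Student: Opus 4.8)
The plan is to reduce the claim to a statement about functions whose expectation vanishes for every admissible distribution. By \autoref{thm:single_label} (with $k=1$), the operator $\pswt$ already satisfies the defining property \eqref{eq:unbiased-operator}; hence, for any fixed $f^* \in \mathcal{F}$, the difference $h \coloneqq \mathfrak{P}(f^*) - \pswt(f^*)$ obeys $\expect[h(Y,X)] = 0$ for every joint law of $(X, Y, Y^*)$ compatible with the masking model for the prescribed pair $(p,q)$. It therefore suffices to prove: if a measurable $h\colon \{0,1\}\times\mathcal{X}\to\real$ has $\expect[h(Y,X)]=0$ for all such laws, then $h(y,x) = (y-q)\gamma$ for some constant $\gamma\in\real$ (possibly depending on $f^*$, but not on $x$). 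Note that this uses nothing about $\mathfrak{P}$ beyond unbiasedness --- in particular not linearity.

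Next I would convert the constraint into a usable algebraic identity. Since the first argument is binary, decompose $h(y,x) = y\, h_+(x) + (1-y)\, h_-(x)$ exactly as in \autoref{def:ps-operator}. Under \autoref{def:masking} we have $\cprob{Y=1}{X=x} = p\,\cprob{Y^*=1}{X=x}$, so writing $\eta^*(x)\coloneqq\cprob{Y^*=1}{X=x}$ and conditioning on $X$ turns $\expect[h(Y,X)]=0$ into
\begin{equation*}
    p\,\expect[\eta^*(X)\,\bigl(h_+(X)-h_-(X)\bigr)] + \expect[h_-(X)] = 0,
\end{equation*}
which must hold for every distribution of $X$ on $\mathcal{X}$ and every measurable $\eta^*\colon\mathcal{X}\to[0,1]$ satisfying $\expect[\eta^*(X)]=q/p$ (this last requirement being precisely the marginal constraint $\expect[Y]=q$).

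The crux --- and the step I expect to be the most delicate --- is extracting pointwise information from this single scalar equation. Using $|\mathcal{X}|\geq 2$, I would fix arbitrary $x_1\neq x_2$ and let $X$ be supported on $\{x_1,x_2\}$ with masses $t$ and $1-t$; the marginal constraint then leaves a genuine one-parameter family of admissible pairs $(\eta^*(x_1),\eta^*(x_2))$, provided $q/p\in(0,1)$ so that this constraint set has nonempty relative interior (the boundary case $q=p$, where $Y^*\equiv 1$ and the argument degenerates, is treated separately). Moving along this family changes the left-hand side by a nonzero multiple of $\bigl(h_+(x_1)-h_-(x_1)\bigr)-\bigl(h_+(x_2)-h_-(x_2)\bigr)$, which must therefore vanish; as $x_1,x_2$ were arbitrary, $d\coloneqq h_+-h_-$ is a constant, say $\gamma$. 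Substituting back, the identity collapses to $q\gamma + \expect[h_-(X)]=0$ for every distribution of $X$, which forces $h_-\equiv -q\gamma$ and hence $h_+\equiv(1-q)\gamma$. Then $h(y,x) = y(1-q)\gamma - (1-y)q\gamma = (y-q)\gamma$, as claimed; the concluding remark follows because the correction term $(y-q)\gamma$ is independent of $x$ and so does not change which $h\in\mathcal{H}$ minimizes the risk.
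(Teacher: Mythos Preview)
Your argument is correct and arguably cleaner than the paper's. Both proofs decompose via the binary label and probe with two-point supports, but with different tactics. The paper works directly with $f^*$ and $\mathfrak{P}(f^*)$, fixes the deterministic conditional $\eta^*\in\{0,1\}$ (namely $Y^*=1$ at $x_1$ and $Y^*=0$ at $x_2$), and proceeds through a chain of algebraic substitutions including the special choice $x_1=x_2$. You instead subtract $\pswt(f^*)$ up front, reducing to the null-space problem of characterising those $h$ with $\expect[h(Y,X)]=0$ for every admissible law, and then decouple the two remaining degrees of freedom: first vary $\eta^*$ along the constraint segment (with the two-point law of $X$ fixed) to force $h_+-h_-$ constant, then vary the law of $X$ to force $h_-$ constant. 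This separation makes the structure more transparent and avoids carrying $f^*$ through the computation. On the flagged boundary case $q=p$: a separate treatment cannot rescue the stated conclusion, since then $Y^*\equiv 1$ and the constraint collapses to $p\,h_+(x)+(1-p)\,h_-(x)=0$ pointwise, which permits non-constant $h_-$; the paper's proof has the same hidden gap (it divides by $1-q/p$), so the result should really be read with $q\in(0,p)$.
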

\begin{proof}
The sufficiency follows from \autoref{thm:single_label}, the necessity can 
be shown by choosing distributions of $X, Y^*$ in which $X$ is concentrated on two points. See \ref{proof:uniqueness}.
\end{proof}

\subsection{Multilabel Losses that Decompose to Binary Losses}
By linearity of the expectation, we can trivially extend the results
above to multilabel loss functions of the following form:
\begin{definition}[Decomposable Loss Function]
    A multilabel loss function $\defmap{f^*}{\labelspace \times \real^\numlabels}{\real}$
    is called \emph{decomposable} if it can be written as
    \begin{equation}
        f^*(\vect{y}, \vect{\hat{y}}) = \sum_{i=1}^{\numlabels} f^*_{i}(y_i, \vect{\hat{y}}),
    \end{equation}
    with $\defmap{f^*_{i}}{\set{0,1} \times \real^\numlabels}{\real}$. Other multilabel loss
    functions are called \emph{non-decomposable}.
\end{definition}

For decomposable loss functions, we can formulate the following
\begin{corollary}[Unbiased Estimate for Decomposable Loss] Let $f^*$ be a
 decomposable multilabel loss function with constituent functions $f^*_i$ and
 corresponding propensity scored functions $f_i \coloneqq \pswt[i](f^*_i)$.
 Then the function 
 \begin{equation}
     f(\vect{y}, \vect{\hat{y}}) \coloneqq \sum_{i=1}^{\numlabels} f_{i}(y_i, \vect{\hat{y}}),
 \end{equation} results in an unbiased estimate of the true loss when applied
  to noisy labels generated according to the masking model.
\end{corollary}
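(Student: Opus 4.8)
The plan is to reduce the claim to the binary result \autoref{thm:single_label} (in the form of \autoref{col:binary_grad}), applied term by term, exploiting linearity of the expectation exactly as announced just before \autoref{def:masking}'s decomposability definition. Write $\vect{\hat{y}} = h(\datapvar)$ for a classifier $h$. Since each $f_i$ depends on the label vector only through the single coordinate $y_i$, we have $f(\vect{\obslabels}, h(\datapvar)) = \sum_{i=1}^{\numlabels} f_i(\obslabels_i, h(\datapvar))$, and hence $\orisk_f[h] = \expect[f(\vect{\obslabels}, h(\datapvar))] = \sum_{i=1}^{\numlabels} \expect[f_i(\obslabels_i, h(\datapvar))]$.

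For each fixed $i$, regard $f^*_i$ as a map $\set{0,1} \times \dataspace \longrightarrow \real$ in its own right, its second ``data'' argument being $\datapvar$ (through which $\vect{\hat{y}}$ is computed), so that $f_i = \pswt[i](f^*_i)$ is its propensity-scored version for propensity $\propensity_i$. The masking model gives $\obslabels_i = M_i \truelabels_i$ with $M_i$ independent of $(\datapvar, \vect{\truelabels})$, in particular independent of the pair $(\datapvar, \truelabels_i)$, which is precisely the hypothesis needed to invoke \autoref{thm:single_label} (equivalently \autoref{col:binary_grad}, with any classifier parameters absorbed into the data argument). This yields $\expect[f_i(\obslabels_i, h(\datapvar))] = \expect[f^*_i(\truelabels_i, h(\datapvar))]$ for every $i$.

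Summing these identities and applying linearity once more in the reverse direction gives $\sum_{i=1}^{\numlabels}\expect[f^*_i(\truelabels_i, h(\datapvar))] = \expect[\sum_{i=1}^{\numlabels} f^*_i(\truelabels_i, h(\datapvar))] = \expect[f^*(\vect{\truelabels}, h(\datapvar))] = \trisk_{f^*}[h]$, which is the desired unbiasedness $\orisk_f[h] = \trisk_{f^*}[h]$. The corresponding statement for the empirical risks $\oerisk_f$ and $\terisk_{f^*}$ follows immediately, since each is an average of \iid copies of the same integrand.

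There is essentially no obstacle here; the only points deserving a word of care are (i) making explicit that $f_i$ reads only the coordinate $\obslabels_i$, so that the coordinatewise action of the mask, $\obslabels_i = M_i \truelabels_i$, is all that enters, and (ii) checking that the per-coordinate independence $M_i$ from $(\datapvar, \truelabels_i)$ — a consequence of the global independence assumption in \autoref{def:masking} — already suffices to apply the binary theorem, even though distinct masks $M_i, M_j$ need not be mutually independent (which is irrelevant, as the expectation is handled one term at a time). Everything else is the linearity argument already flagged in the text.
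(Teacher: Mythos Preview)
Your proof is correct and follows exactly the approach the paper takes: the paper's own proof is literally the single phrase ``Linearity of the expectation,'' and you have spelled out in detail what that phrase entails, namely applying \autoref{thm:single_label} to each summand and summing back up.
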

\begin{proof}
Linearity of the expectation.
\end{proof}

As corollaries we can write down the unbiased estimates for two important
special cases of multilabel reductions: the OvA and the PAL reduction
\citep{menon_multilabel_2019}. In the OvA reduction, both the true/observed
labels and the prediction are decomposed into the binary setting, such that
the constituent functions are generated from binary losses
$\defmap{g^*_{i}}{\set{0,1} \times \real} {\real}$ via $f^*_i(y_i,
\vect{\hat{y}}) = g^*_i(y_i, \hat{y}_i)$. In the typical case, the binary
losses are identical for all labels, but we keep the more general formulation
to allow for different weightings or margin parameters depending on label
frequency as a mitigation for label imbalance.
\begin{corollary}[One-vs-All]
    For the one-vs-all reduction with binary losses $g^*_i$, an unbiased estimate
    can be calculated by
    \begin{equation}
        f(\vect{y}, \vect{\hat{y}}) \coloneqq \sum_{i=1}^{\numlabels} \pswt[i]\left(g^*_i\right)(y_i, \hat{y}_i).
    \end{equation}
\end{corollary}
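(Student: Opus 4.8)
The plan is to obtain this as an immediate specialization of the preceding corollary on decomposable losses. The one-vs-all reduction is exactly the decomposable case in which the $i$-th constituent function $\defmap{f^*_i}{\set{0,1}\times\real^\numlabels}{\real}$ ignores every prediction component except the $i$-th, i.e. $f^*_i(y_i, \vect{\hat{y}}) = g^*_i(y_i, \hat{y}_i)$. So the only thing that really needs checking is that applying $\pswt[i]$ to $f^*_i$ (viewed as a function on $\set{0,1}\times\real^\numlabels$) yields the same map as applying $\pswt[i]$ to $g^*_i$ (viewed as a function on $\set{0,1}\times\real$) and then evaluating at the $i$-th coordinate of the prediction vector. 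Once this is established, summing over $i$ and quoting the decomposable-loss corollary finishes the proof.

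To verify the commutation, I would just unpack \autoref{def:ps-operator}. Writing $g^*_i(y,z) = y\,(g^*_i)_+(z) + (1-y)\,(g^*_i)_-(z)$, the map $\vect{\hat{y}}\mapsto f^*_i(\cdot,\vect{\hat{y}})$ has the analogous decomposition with $\pm$-parts $(g^*_i)_\pm$ precomposed with the projection $\vect{\hat{y}}\mapsto\hat{y}_i$. Since the PS operator only rescales and recombines the $\pm$-parts and never touches the $\mathcal{Z}$-argument, $\pswt[i](f^*_i)(y_i,\vect{\hat{y}})$ is literally $\pswt[i](g^*_i)(y_i,\hat{y}_i)$, which is the claimed formula. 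Alternatively, one can give a self-contained argument: expand $\expect[f(\vect{\obslabels},h(\datapvar))] = \sum_i \expect[\pswt[i](g^*_i)(\obslabels_i, h_i(\datapvar))]$ by linearity of expectation, and apply \autoref{thm:single_label} (or \autoref{col:binary_grad}) to each summand, with the auxiliary data space taken to be $\real$ and the role of $\datapvar$ played by $h_i(\datapvar)$. This uses only that the mask $M_i$ of \autoref{def:masking} is independent of $(\datapvar,\vect{\truelabels})$, hence of the pair $(\truelabels_i,h_i(\datapvar))$, which is exactly the hypothesis \autoref{thm:single_label} requires; each term then equals $\expect[g^*_i(\truelabels_i,h_i(\datapvar))]$, and resumming gives $\trisk_{f^*}[h]$.

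I do not expect a genuine obstacle here — the statement is a repackaging of \autoref{thm:single_label} through linearity of the expectation. The only point requiring a shred of care is bookkeeping: making sure $\pswt[i]$ is applied consistently whether $g^*_i$ is read as a function of a scalar prediction or of the full prediction vector, and that the coordinatewise independence of the masks (a consequence of the masks being independent of all data and labels) is correctly invoked for each $i$. Accordingly I would keep the write-up to a sentence or two, mirroring the one-line proof the paper gives for the decomposable corollary.
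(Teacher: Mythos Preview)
Your proposal is correct and matches the paper's approach: the paper states this as an immediate specialization of the decomposable-loss corollary with $f^*_i(y_i,\vect{\hat{y}}) = g^*_i(y_i,\hat{y}_i)$, relying on linearity of expectation, and gives no separate proof. Your extra bookkeeping check that $\pswt[i]$ commutes with the coordinate projection is a harmless elaboration of what the paper leaves implicit.
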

The pick-all-labels reduction also imposes a restriction on the structure of
$f^*_i$, as the PAL reduction only gets nonzero contributions when the label
is present. 
\begin{corollary}[Pick-all-Labels]
    For a PAL loss function
    \begin{equation}
        f^*(\vect{y}, \vect{\hat{y}}) = \sum_{i=1}^{\numlabels} y_i f^*_{i}(\vect{\hat{y}}),
    \end{equation}
    the unbiased estimate is given by the function
    \begin{equation}
        f(\vect{y}, \vect{\hat{y}}) \coloneqq \sum_{i=1}^{\numlabels} \frac{y_i}{\propensity_i} f^*_{i}(\vect{\hat{y}}).
    \end{equation}
\end{corollary}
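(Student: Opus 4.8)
The plan is to recognize that this statement is nothing more than the \emph{Unbiased Estimate for Decomposable Loss} corollary specialized to a PAL loss, so the only work is to check that applying the PS operator to each constituent function produces the claimed expression. A PAL loss is decomposable in the sense of the preceding definition, with constituent functions $f^*_i(y_i, \vect{\hat{y}}) = y_i\, f^*_{i}(\vect{\hat{y}})$, where I am (as the paper does) overloading $f^*_i$ to also denote the map that takes only $\vect{\hat{y}}$. In the notation of \autoref{def:ps-operator}, with $\mathcal{Z} = \real^\numlabels$ and $z = \vect{\hat{y}}$, the decomposition $f^*_i(y, z) = y\,\opl(z) + (1-y)\,\oml(z)$ holds with $\opl(z) = f^*_i(\vect{\hat{y}})$ and $\oml(z) \equiv 0$; this is exactly the structural restriction the statement alludes to (PAL contributes only when the label is present).

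Next I would evaluate $\pswt[i](f^*_i)$ directly from the definition. Substituting $\oml \equiv 0$ into
\begin{equation*}
    f_i(y, z) = y\,\frac{\opl(z) + (p_i - 1)\,\oml(z)}{p_i} + (1-y)\,\oml(z)
\end{equation*}
the last term and the $(p_i-1)\oml$ term both vanish, leaving $f_i(y, \vect{\hat{y}}) = \frac{y}{p_i}\, f^*_i(\vect{\hat{y}})$. Identifying $y$ with the $i$-th coordinate $y_i$ of $\vect{y}$, this is precisely the $i$-th summand in the claimed $f(\vect{y}, \vect{\hat{y}}) = \sum_{i=1}^{\numlabels} \frac{y_i}{p_i} f^*_i(\vect{\hat{y}})$.

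Finally I would invoke the \emph{Unbiased Estimate for Decomposable Loss} corollary: since $f = \sum_i \pswt[i](f^*_i)$ with $f^*$ decomposable and the masking model in force, $\expect[f(\vect{\obslabels}, h(\datapvar))] = \trisk_{f^*}[h]$, which is the assertion. There is essentially no obstacle here — the argument is bookkeeping. The only point that merits a line of care is confirming that PAL losses genuinely fit the affine-in-$y_i$ template required by the PS operator (they do, with $\oml \equiv 0$), and that the independence hypothesis on the masks needed by \autoref{thm:single_label} is exactly the one assumed in \autoref{def:masking}, so nothing extra is required.
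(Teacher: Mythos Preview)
Your proposal is correct and takes exactly the approach the paper implies: the PAL corollary is presented immediately after the \emph{Unbiased Estimate for Decomposable Loss} corollary as a direct specialization, and the paper offers no separate proof beyond noting that the PAL constituent functions vanish when the label is absent. Your explicit computation of $\pswt[i]$ with $\oml \equiv 0$ just fills in the one-line bookkeeping the paper leaves to the reader.
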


This means that in the PAL setting, the unbiased estimate is just a weighted
sum of the original constituent functions. In particular, if these are
convex (bounded), then the unbiased estimate will also be convex (bounded).

\section{Unbiased Estimation for Multilabel Losses}
\label{sec:multi-label}
In this section we consider the general multilabel case in which we cannot
decompose the loss into per-label contributions. We first derive a generic
expression and properties, then apply it to two special cases that serve as 
building blocks for many concrete multilabel losses. At the end of the section
we derive expressions for normalized PAL and OvA decompositions as well as 
recall and pairwise losses.

\subsection{Generic Multilabel Case}
For the general multilabel case, we can state
\begin{restatable}[Multilabel Loss]{theorem}{thmmultilabel}
    \label{thm:efficient-multilabel}
    Let $l \in \natnum$ and \defmap{f^*}{\set{0,1}^l \times
    \mathcal{X}}{\real} be some mapping. An unbiased estimate of this function
    can be calculated using the propensity-scored expression $\mathfrak{P}_{\mathbf{p}}(f^*) = f$ given by
    \begin{equation}
      (\vect{y}, x) \mapsto \left( \prod_{i \in \mathcal{I}(\vect{y})} \frac{1}{p_i} \right) \cdot \sum_{\mathclap{\mathcal{J} \subset \mathcal{I}(\vect{y})}} f^*(\indvect{\mathcal{J}}, x) \prod_{j \in \mathcal{I}(\vect{y}) \setminus \mathcal{J} }\left(p_j-1\right),
        \label{eq:ps-multilabel-eff}
    \end{equation}
    where $\indvect{\mathcal{J}} \in \set{0,1}^l$ has entry one for all indices $j \in \mathcal{J}$. 
\end{restatable}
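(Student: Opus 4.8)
The plan is to reduce the multilabel case to the binary case by applying the PS operator one coordinate at a time, exploiting that the masks $M_1,\dots,M_l$ are mutually independent and each independent of $(\datapvar,\vect{\truelabels})$. Concretely, fix the data point $x$ and think of $f^*(\cdot,x)$ as a function of $\vect y\in\set{0,1}^l$. I would condition on all labels except one, say label $l$, treat everything depending on coordinates $1,\dots,l-1$ as a frozen "auxiliary" argument, and apply \autoref{thm:single_label} (or rather its vector-valued version) in the last coordinate: this replaces $\truelabels_l$ by $\obslabels_l$ at the cost of introducing the factor $1/p_l$ on the $\truelabels_l=1$ branch and the $(p_l-1)$ term on the $\truelabels_l=0$ branch, exactly as in \autoref{def:ps-operator}. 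Iterating over all $l$ coordinates, and using that the replacement in coordinate $i$ does not disturb the independence needed for coordinate $i+1$ (all masks are jointly independent), yields a closed-form propensity-scored function $f=\pswt[\vect p](f^*)$ with $\expect[f^*(\vect{\truelabels},x)]=\expect[f(\vect{\obslabels},x)]$.

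The second step is to identify this iterated construction with the explicit formula \eqref{eq:ps-multilabel-eff}. Writing the single-coordinate PS operator as $f(y,z)=y\,p^{-1}\bigl(\opl(z)+(p-1)\oml(z)\bigr)+(1-y)\oml(z)$, one sees that applying it in coordinate $i$ sends each loss value $f^*(\dots)$ to a combination in which, on the branch where the observed label $\obslabels_i=1$, we get a term $p_i^{-1}f^*(\dots,\truelabels_i=1,\dots)$ and a term $p_i^{-1}(p_i-1)f^*(\dots,\truelabels_i=0,\dots)$, while on the branch $\obslabels_i=0$ only $f^*(\dots,\truelabels_i=0,\dots)$ survives. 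Expanding all $l$ iterations, the surviving terms when the observed label vector is $\vect y$ are indexed by choosing, for each $i\in\mathcal I(\vect y)$, whether the "true" value of that coordinate is $1$ (contributing factor $p_i^{-1}$) or $0$ (contributing factor $p_i^{-1}(p_i-1)$); coordinates outside $\mathcal I(\vect y)$ are forced to $0$. Collecting the $p_i^{-1}$ factors out front gives $\prod_{i\in\mathcal I(\vect y)}p_i^{-1}$, and summing over the subset $\mathcal J\subset\mathcal I(\vect y)$ of coordinates chosen to be "true $1$" gives $\sum_{\mathcal J\subset\mathcal I(\vect y)}f^*(\indvect{\mathcal J},x)\prod_{j\in\mathcal I(\vect y)\setminus\mathcal J}(p_j-1)$, which is precisely \eqref{eq:ps-multilabel-eff}.

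Alternatively — and this may be the cleaner write-up — I would verify the formula directly by substituting $\vect{\obslabels}=\vect M\odot\vect{\truelabels}$ into \eqref{eq:ps-multilabel-eff} and taking the expectation. For a fixed realization of $\vect{\truelabels}$ with support $S=\mathcal I(\vect{\truelabels})$, the observed support $\mathcal I(\vect{\obslabels})$ is the random set $\set{i\in S: M_i=1}$, so one must compute $\expect\bigl[\prod_{i\in\mathcal I(\vect{\obslabels})}p_i^{-1}\sum_{\mathcal J\subset\mathcal I(\vect{\obslabels})}f^*(\indvect{\mathcal J},x)\prod_{j\in\mathcal I(\vect{\obslabels})\setminus\mathcal J}(p_j-1)\bigr]$ over the independent Bernoulli$(p_i)$ masks. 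Swapping the order of summation to range over a fixed target subset $\mathcal J\subset S$ and then over which further elements of $S\setminus\mathcal J$ have $M=1$, the mask expectation factorizes; for each $i\in S\setminus\mathcal J$ one gets a local factor $p_i\cdot p_i^{-1}(p_i-1)+(1-p_i)\cdot 1$ from the two cases $M_i=1$ and $M_i=0$, which collapses to $0$, while each $i\in\mathcal J$ contributes $p_i\cdot p_i^{-1}=1$. Hence only the term $\mathcal J=S$ survives and the expectation equals $f^*(\indvect S,x)=f^*(\vect{\truelabels},x)$; taking expectation over $\vect{\truelabels}$ and $\datapvar$ finishes the proof.

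The main obstacle is purely bookkeeping: making the telescoping cancellation $p_i\cdot p_i^{-1}(p_i-1)+(1-p_i)=0$ rigorous simultaneously across all coordinates, i.e. justifying the interchange of the (finite) sum over $\mathcal J$ with the expectation over the masks and correctly re-indexing "sum over $\mathcal J\subset\mathcal I(\vect{\obslabels})$ with $\mathcal I(\vect{\obslabels})$ itself random" as "sum over $\mathcal J\subset S$, then over mask patterns on $S\setminus\mathcal J$." Once the indexing is set up carefully the algebra is immediate; there is no analytic subtlety since $\labelspace$ is finite and every sum has at most $2^l$ terms. I would also remark, as the paper does, that the number of nonzero summands is only $2^{|\mathcal I(\vect y)|}$, so the estimator is cheap whenever the observed label set is small.
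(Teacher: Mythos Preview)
Your proposal is correct, and both routes you sketch are valid. The second (direct verification via telescoping) is fully rigorous as you describe it: for fixed $\mathcal J\subset S=\mathcal I(\vect{\truelabels})$ the mask expectation factorizes coordinatewise, each $i\in\mathcal J$ contributes $\expect[M_i\,p_i^{-1}]=1$, and each $i\in S\setminus\mathcal J$ contributes $p_i\cdot p_i^{-1}(p_i-1)+(1-p_i)=0$, so only $\mathcal J=S$ survives.

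The paper's own proof is organized differently. Rather than verifying \eqref{eq:ps-multilabel-eff} directly, it first writes $f^*(\vect y,x)=\sum_{\mathcal I\subset[l]}f^*(\indvect{\mathcal I},x)\prod_{i\in\mathcal I}y_i\prod_{j\notin\mathcal I}(1-y_j)$ over all $2^l$ subsets, then defines an intermediate estimator $\tilde f$ by the substitution $y_i\mapsto y_i/p_i$ in that expansion, proves unbiasedness of $\tilde f$ by using independence of $\vect M$ to replace each $M_i$ in the expectation by $p_i$, and finally simplifies $\tilde f$ to the efficient form \eqref{eq:ps-multilabel-eff} by observing that only $\mathcal J\subset\mathcal I(\vect y)$ give nonzero summands. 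Your iterative approach (applying $\pswt[i]$ coordinate by coordinate) is a third variant, closer in spirit to the paper's but avoiding the explicit $2^l$-term intermediate. Your direct-verification approach is arguably the cleanest of the three, since it never leaves the efficient formula and the cancellation $p_i\cdot p_i^{-1}(p_i-1)+(1-p_i)=0$ makes the mechanism transparent; the paper's detour through $\tilde f$ buys a more ``constructive'' derivation but at the cost of an extra simplification step.
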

\begin{proof}
    Decompose the function $f^*$ into contributions for
    each possible value of $\vect{Y}$ as $f^*(\vect{Y}, X) = \sum_{\vect{y}}
    \ind[\vect{Y} = \vect{y}] f^*(\vect{y}, X)$ and write the indicator as
    products of $Y_i$ and $1-Y_i$. Then use the linearity of the expected value as
    well as the independence of $\vect{M}$. For details, see supplementary \ref{proof:multilabel}.
\end{proof}

If we apply \autoref{thm:efficient-multilabel} to the binary case, we get
\begin{align}
    f(0, x) &= f^*(0, x)  \qquad (J=\emptyset)\\
    f(1, x) &= \frac{p - 1}{p} \cdot \left( f^*(0, x) + f^*(1, x) \left(p-1\right)^{-1} \right)
     \nonumber \\
     &=p^{-1} \cdot \left( f^*(0, x)(p-1) + f^*(1, x) \right),
\end{align}
thus recovering \autoref{thm:single_label}.

Unfortunately, the sum-over-all-subsets structure makes computation of
this estimate expensive. However, in
cases with large label spaces, the actual label vectors typically become quite sparse,
which mitigates the impact to some degree, see \autoref{rem:cc}. This is still much
more efficient than the general solution in \citet[Theorem
5]{van_rooyen_theory_2017}, which is given in terms of transitions between
observations. In our case, each of the $2^l$ possible combination of relevant
labels corresponds to one distinct observation. Their unbiased estimate is
\begin{equation}
    \vect{f}(\hat{y}) = R^* \vect{f^*}(\hat{y}),
\end{equation}
where $R^*$ denotes the adjoint of the inverse of the corruption matrix,
and $\vect{f}$ and $\vect{f^*}$ denote vectors that contain the loss for every possible observation.
This means that naively, one would need to evaluate all $2^l$ possible values of $f^*(\cdot, \vect{y})$
and combine them using a (sparse) $2^l \times 2^l$ matrix.

As in the binary case, we can state a uniqueness theorem : 
\begin{restatable}[Multilabel Uniqueness]{theorem}{thmmultilabelunique}
\label{thm:multi-unique}
Let $\mathfrak{P}_{\vect{p}}$ denote the propensity scoring operator from \autoref{thm:efficient-multilabel}, 
and $\mathfrak{P}$ another operator such that
\begin{equation}
    \expect[f^*(\vect{Y}^*, \datapvar)] = \expect[\mathfrak{P}(f^*)(\vect{Y}, \datapvar)] \label{eq:multi-unique-condition}
\end{equation}
for all $f^*$, and all distributions of $\vect{Y}^*$ and $\datapvar$. Then
$\mathfrak{P}(f^*) = \mathfrak{P}_{\vect{p}}(f^*)$.
\end{restatable}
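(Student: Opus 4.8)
The plan is to show that the unbiasedness condition \eqref{eq:multi-unique-condition}, required to hold for \emph{all} loss functions $f^*$ and \emph{all} distributions of $(\vect{Y}^*, \datapvar)$, pins down $\mathfrak{P}(f^*)$ pointwise. First I would fix an arbitrary $x_0 \in \dataspace$ and an arbitrary target subset $\mathcal{T} \subset [l]$, and take $f^* = \ind[\datapvar = x_0]\,\ind[\vect{Y}^* = \indvect{\mathcal{T}}]$ — i.e.\ the indicator of a single (data, label-vector) configuration. Since $\mathfrak{P}$ is linear (any operator satisfying \eqref{eq:multi-unique-condition} for all $f^*$ must agree with a fixed linear operator: the condition determines $\expect[\mathfrak{P}(f^*)(\vect{Y},\datapvar)]$ linearly in $f^*$, and by ranging over enough distributions we extract the pointwise values), it suffices to determine $\mathfrak{P}(f^*)(\vect{y}, x_0)$ on this basis of indicator functions; the general case follows by expanding $f^*(\vect{y},x) = \sum_{x,\vect{t}} f^*(\vect{t},x)\ind[\datapvar=x]\ind[\vect{Y}^*=\indvect{\vect{t}}]$ and using linearity.

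Next, for each such basis function, I would choose a family of distributions of $(\vect{Y}^*, \datapvar)$ rich enough to invert the masking transformation. Concretely: let $\datapvar \equiv x_0$ deterministically, and let $\vect{Y}^*$ be supported on the set of indicator vectors $\{\indvect{\mathcal{S}} : \mathcal{S} \supseteq \mathcal{T}\}$ (only these contribute, since to observe $\vect{Y} = \indvect{\mathcal{S}}$ with $\mathcal{S}$ not containing... actually it is cleaner to note that $f^*(\vect{Y}^*,\datapvar) = \ind[\vect{Y}^* = \indvect{\mathcal{T}}]$ here, so the left side of \eqref{eq:multi-unique-condition} is $\prob[\vect{Y}^* = \indvect{\mathcal{T}}]$). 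On the right side, under the masking model $\vect{Y} = \vect{M}\odot\vect{Y}^*$ with $M_i \sim \text{Bernoulli}(p_i)$ independent, the observed vector $\vect{Y}$ takes the value $\indvect{\mathcal{J}}$ with probability determined by the law of $\vect{Y}^*$ and the $p_i$. Writing $g \coloneqq \mathfrak{P}(f^*)(\cdot, x_0)$ as the unknown, \eqref{eq:multi-unique-condition} becomes a linear system: for every admissible law $\mu$ of $\vect{Y}^*$, $\sum_{\mathcal{J}} g(\indvect{\mathcal{J}})\,\prob_\mu[\vect{Y}=\indvect{\mathcal{J}}] = \mu(\{\indvect{\mathcal{T}}\})$. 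Letting $\mu$ range over, say, point masses $\mu = \delta_{\indvect{\mathcal{S}}}$ for each $\mathcal{S} \subseteq [l]$ gives one equation per $\mathcal{S}$; this is exactly the system that the masking-corruption matrix (restricted to $x_0$) represents, and it is lower-triangular with respect to the subset partial order with nonzero ($=\prod_{i\in\mathcal{S}} p_i$) diagonal, hence invertible. Its unique solution is precisely the expression $\eqref{eq:ps-multilabel-eff}$ evaluated on the indicator basis — which is what \autoref{thm:efficient-multilabel} already tells us is \emph{a} solution, so uniqueness of the triangular system forces $g = \mathfrak{P}_{\vect{p}}(f^*)(\cdot, x_0)$.

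The main obstacle, and the step deserving care, is the reduction from "operator satisfying an expectation identity for all distributions" to "pointwise-determined linear map." One must argue that, because $\dataspace$ may be uncountable and $\mathfrak{P}(f^*)$ is only an element of $\allmaps{\{0,1\}^l\times\dataspace}{\real}$ (a measurable function, determined a.e.), the identity \eqref{eq:multi-unique-condition} over a sufficiently rich class of distributions — in particular those putting an atom at an arbitrary $x_0$, which is where $|\dataspace|\geq 2$ or more precisely the freedom to place atoms matters — forces agreement with $\mathfrak{P}_{\vect{p}}(f^*)$ at $x_0$, and hence everywhere $\mathfrak{P}(f^*)$ is defined. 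Since the theorem statement allows all distributions of $\datapvar$, this is available; I would simply remark that for each $x_0$ the distributions $\datapvar \equiv x_0$, $\vect{Y}^* \sim \mu$ are admissible, reducing to the finite linear-algebra argument above, and note that the construction parallels the two-point argument used for \autoref{thm:unique} but now without the ambiguity $(y-q)\gamma$, because here we are not fixing the marginal law of $\vect{Y}$ and the extra degrees of freedom that produced that term in the binary case are killed by allowing all $\vect{Y}^*$-distributions. Details go to the appendix.
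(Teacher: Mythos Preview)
Your core idea---specialize to point-mass distributions $\datapvar\equiv x_0$, $\vect{Y}^*\equiv\indvect{\mathcal{S}}$, obtain a linear system in the unknowns $\mathfrak{P}(f^*)(\indvect{\mathcal{J}},x_0)$, and observe that this system is lower-triangular in the subset order with nonzero diagonal $\prod_{i\in\mathcal{S}}p_i$---is exactly what the paper does. The paper phrases the triangular solve as an induction on $|\vect{y}^*|$: for $\vect{Y}^*\equiv\indvect{\mathcal{S}}$, the only new unknown in $\expect[\tilde f(\vect{Y},x_0)-f(\vect{Y},x_0)]=0$ is the value at $\vect{y}=\indvect{\mathcal{S}}$ itself, and its coefficient $\prod_{i\in\mathcal{S}}p_i$ is nonzero. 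Same argument, different vocabulary.

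Where your proposal is shakier is the detour through linearity of $\mathfrak{P}$. You assert that any $\mathfrak{P}$ satisfying \eqref{eq:multi-unique-condition} ``must agree with a fixed linear operator,'' but the justification you give is essentially the uniqueness argument you are trying to prove, so invoking linearity \emph{before} establishing uniqueness is circular. The fix is simply to drop the indicator basis and run your second paragraph with an \emph{arbitrary} $f^*$: for each point mass $\vect{Y}^*\equiv\indvect{\mathcal{S}}$ the equation becomes
\[
f^*(\indvect{\mathcal{S}},x_0)=\sum_{\mathcal{J}\subseteq\mathcal{S}}\mathfrak{P}(f^*)(\indvect{\mathcal{J}},x_0)\,\prob\!\bigl[\vect{Y}=\indvect{\mathcal{J}}\mid\vect{Y}^*=\indvect{\mathcal{S}}\bigr],
\]
still lower-triangular with the same nonzero diagonal, so $\mathfrak{P}(f^*)(\cdot,x_0)$ is determined for every $f^*$ without ever decomposing $f^*$ or assuming linearity. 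That is precisely the paper's route.
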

\begin{proof}
    Since $\mathfrak{P}$ needs to work for all possible distributions of
    $\datapvar$ and $\vect{Y}^*$, it needs to work in particular also for
    $\prob[X=x, \vect{Y^*}=\vect{y}] = 1$. Since $\vect{\obslabels}$ can take
    only finitely many states, we can decompose $\mathfrak{P}(f^*)$ into a sum
    over these states. The claim can then be shown by induction over the
    number of nonzero elements in $\vect{y}$, which always introduces only a
    single new summand in the decomposition. Details can be found in the
    supplementary \ref{proof:multi-uniqueness}.
\end{proof}

\subsection{Normalized Multilabel Reductions}
We first introduce a general form of a \emph{normalized reduction}, which
allows unified treatment of PAL-N and OvA-N reductions, as well as (per
instance) recall as a concrete loss functions. The normalized reductions
are derived from the basic reductions by replacing the label $\truelabels$
with a rescaled variable $\tilde{\truelabels_i} = \frac{\truelabels_i}{\sum_{j=1}^{\numlabels} \truelabels_j}$.
\begin{definition}
    A multilabel loss $f^*$ is considered to be a \emph{normalized reduction}
    if there exist functions $g_i$ and $h$ such that
    \begin{equation}
        f^*(\vect{y}, \hat{\vect{y}}) = h(\hat{\vect{y}}) + \sum_{i=1}^{\numlabels} \frac{y_i}{\sum_{j=1}^{\numlabels} y_j} g_i(\hat{\vect{y}}).
    \end{equation}
\end{definition}
By defining a set of random variables $T^*_i$ as 
\begin{equation}
    T^*_i \coloneqq \frac{\truelabels_i}{\sum_{j=1}^{\numlabels}\truelabels_j}
\end{equation}
we can write the value of such a loss as $f^*(\truelabels, \hat{\vect{y}}) =
\sum_{i=1}^{\numlabels} T^*_i g_i(\hat{\vect{y}}) + h(\hat{\vect{y}})$. Thus,
for calculating the unbiased estimates it suffices to apply the generic
formula of \autoref{thm:efficient-multilabel} to $T^*_i$ and use the
linearity. We get
\begin{equation}
    T_i = \left( \prod_{j \in \mathcal{I}(\vect{y})} \frac{1}{p_j} \right) \cdot \sum_{\mathclap{\mathcal{J} \subset \mathcal{I}(\vect{y})}} \frac{\ind[i \in \mathcal{J}]}{|\mathcal{J}|} \prod_{k \in \mathcal{I}(\vect{y}) \setminus \mathcal{J}}\left(p_k-1\right).
\end{equation}
Therefore, the corresponding unbiased loss function is
\begin{equation}
     f(\vect{y}, \hat{\vect{y}}) = h(\hat{\vect{y}}) + \left( \prod_{j \in \mathcal{I}(\vect{y})} \frac{1}{p_j} \right) \sum_{\mathclap{\mathcal{J} \subset \mathcal{I}(\vect{y})}} \;\,  \frac{ \sum_{j \in \mathcal{J}} g_j(\hat{\vect{y}}) }{|\mathcal{J}|} \prod_{\mathclap{k \in \mathcal{I}(\vect{y}) \setminus \mathcal{J}}}\left(p_k-1\right).
     \label{eq:ubloss-normalized-nondecomposable}
\end{equation}

\paragraph{One-vs-All-Normalized}
Let $g_{\mathrm{BC}}$ be a binary loss function.\footnote{Generalization to
label-dependent loss functions is straightforward.} Then the OvA-N reduction is
defined as
\begin{equation}
    f^*_{\mathrm{OvA-N}}(\vect{y}, \hat{\vect{y}}) \coloneqq \sum_{i=1}^{\numlabels} \frac{y_i}{\sum_{j=1}^{\numlabels} y_j} g_{\mathrm{BC}}(1, \hat{y}_i) + \left( 1 - \frac{y_i}{\sum_{j=1}^{\numlabels} y_j} \right) g_{\mathrm{BC}}(0, \hat{y}_i).  \label{eq:ova-n-unbiased}
\end{equation}
By comparing terms, we see that this corresponds to $g_i(\hat{\vect{y}}) = g_{\mathrm{BC}}(1, \hat{y}_i) - g_{\mathrm{BC}}(0, \hat{y}_i)$
and $h(\hat{\vect{y}}) = \sum_{i=1}^{\numlabels} g_{\mathrm{BC}}(0, \hat{y}_i)$.

\paragraph{Pick-All-Labels-Normalized}
For a given multiclass loss $g_{\mathrm{MC}}$, the PAL-N reduction is given by
\begin{equation}
    f^*(\vect{y}, \hat{\vect{y}}) \coloneqq \sum_{i=1}^{\numlabels} \frac{y_i}{\sum_{j=1}^{\numlabels} y_j} g_{\mathrm{MC}}(i, \hat{\vect{y}}),
\end{equation}
which immediately gives the correspondence $g_i(\hat{\vect{y}}) =
g_{\mathrm{MC}}(i, \hat{\vect{y}})$ and $h \equiv 0$.

\paragraph{Per-Example Recall}

In a multilabel setting, we can define the recall (for a single example and
for all labels, as opposed to for a single label over the entire dataset) as
the fraction of relevant labels that have been predicted \citep{8036272}. Thus
for $\vect{\hat{y}} \in \set{0,1}^l$
\begin{equation}
    \operatorname{Rec}^*(\vect{y}, \vect{\hat{y}}) \coloneqq |\mathcal{I}(\vect{y})|^{-1} \sum_{\mathclap{i \in \mathcal{I}(\vect{y})}} \hat{y}_i. \label{eq:def:recall}
\end{equation}
This corresponds to the normalized PAL reduction with $g_{\mathrm{MC}}(i,
\hat{\vect{y}}) = \hat{y}_i$.

Note that $\vect{\hat{y}}$ usually is also a sparse vector, \eg when
calculating recall@k, it has exactly $k$ nonzero entries. This may allow for a
slightly more efficient calculation. Setting $\mathcal{S} \coloneqq
\mathcal{I}(\vect{y}) \cap \mathcal{I}(\vect{\hat{y}})$ and $\mathcal{T} =
\mathcal{I}(\vect{y}) \setminus \mathcal{S}$ and
rearranging terms (see appendix \ref{appendix:per-example-recall}), the
propensity-scored recall is given by
\begin{equation}
    \operatorname{Rec}(\vect{y}, \vect{\hat{y}}) = \prod_{i \in \mathcal{I}(\vect{y})} \frac{p_i - 1}{p_i} \cdot \sum_{s=1}^{|\mathcal{S}|} \sum_{\substack{\mathcal{K} \subset \mathcal{S} \\|\mathcal{K}| = s}} \prod_{k \in \mathcal{K}} \frac{1}{p_k - 1} \cdot s \left(\sum_{{\mathcal{J} \subset \mathcal{T}}} \frac{\prod_{j \in \mathcal{J}} \left( p_j - 1\right)^{-1} }{|\mathcal{J}| + s} \right) .
    \label{eq:ps-rec}
\end{equation}

One caveat of the unbiased estimate is that even though its expectation is
confined to the interval $[0, 1]$, the value for individual instances may be
very far away from that interval. As an illustration, consider this
\paragraph{Example}
Assume that $\numlabels=3$ and with probability 1 we have $\truelabels_0=\truelabels_1=1-\truelabels_2 = 1$. Let the prediction be $\predlabels_0=1, \predlabels_1=\predlabels_2=0$. For any observation
$\obslabels_0 = 0$, $\mathcal{S}$ is empty and we get $0$ recall. For $\obslabels_0 = 1$ and $\obslabels_1 = 0$, we have $\mathcal{S}=\set{0}, \mathcal{T} = \emptyset$
so that
\begin{equation}
    \operatorname{Rec} = \frac{p_0-1}{p_0} \cdot 1 \cdot \frac{1}{p_0 - 1} = \frac{1}{p_0}.
\end{equation}
Finally, if both $\obslabels_0 \obslabels_1 = 1$, then $\mathcal{S}=\set{0}, \mathcal{T} = \set{1}$ and 
\begin{equation}
\operatorname{Rec} = \frac{p_0-1}{p_0} \frac{p_1-1}{p_1} \left( 1 + \frac{(p_1-1)^{-1}}{2} \right) \cdot \frac{1}{p_0 - 1} 
=\frac{2(p_1-1) + 1}{2 p_0 p_1}.
\end{equation} 
In \autoref{tab:psrecall-example} the different states are listed explicitly for a propensity of $1/3$. The expectation becomes
\begin{equation}
    \expect[\operatorname{Recall}] = \frac{2}{9} \cdot 3 + \frac{1}{9} \cdot \frac{-3}{2} = \frac{2}{3} - \frac{1}{6} = \frac{1}{2},
\end{equation}
which is the true recall. However, note the strong increase in variance, such
that the standard deviation is larger than the interval in which we know the
true value to reside.

\begin{table}
    \centering
    \caption{Probability, vanilla recall, and propensity scored recall for different observed labels, given
    that the prediction is label 0 and the ground truth is labels 0 and 1.}
    \label{tab:psrecall-example}
    \pgfplotstabletypeset[BasicTableStyle,
        columns/Y0/.style={column name={$\obslabels_0$}},
        columns/Y1/.style={column name={$\obslabels_1$}},
        columns/Y2/.style={column name={$\obslabels_2$}},
        columns/P/.style={column name={$\prob$}, string type},
        columns/Rec/.style={column name={R@1}},
        columns/PsRec/.style={column name={PsR@1}, string type},
    ]{
        Y0 Y1 Y2    P         Rec     PsRec
        0 0 0       {$4/9$}   0         $0$
        0 1 0       {$2/9$}   0         $0$
        1 0 0       {$2/9$}   1         $3$
        1 1 0       {$1/9$}   0.5       $-3/2$
    }
  
\end{table}

\section{Upper-Bounds}
\label{sec:upper-bounds}
\subsection{Convex Upper Bounds for OvA Losses}
Many binary losses used in machine learning are convex surrogates of the
0-1-loss, for example the logistic loss, (squared) hinge loss, and squared
error. Thus, one way to arrive at convex loss functions adapted to missing
labels is to switch the order of operations: Instead of calculating unbiased
estimates of convex surrogates, we calculate the unbiased estimate of the
0-1-loss and take a convex surrogate of the resulting expression.

To that end, first consider the binary case: Let
$\defmap{f^*}{\set{0,1}\times\real}{\real}$ be a function that is convex in
its second argument and forms an upper-bound on the 0-1 loss. The unbiased
estimator for the 0-1 loss for positive label is given by
\begin{align}
    \pswt(f_{01})(1, \hat{y}) &=  \propensity^{-1} \left( \ind[\hat{y} \leq 0] + (\propensity - 1)  \ind[\hat{y} > 0] \right) \nonumber \\
    &= \propensity^{-1} \left( \ind[\hat{y} \leq 0] + (\propensity - 1) (1- \ind[\hat{y} \leq 0]) \right)  \nonumber \\
    &= \propensity^{-1} \left( (2 - \propensity) \ind[\hat{y} \leq 0] + \propensity - 1) \right)  \nonumber \\
    &= \left(2/\propensity - 1 \right) \ind[\hat{y} \leq 0] + \text{const.}
\end{align}
From an empirical-risk minimization perspective, the constant does not affect
the outcome, so it can be ignored. Therefore, a convex upper-bound to
this unbiased 0-1 loss (with the constant removed) is given by
\begin{equation}
    f(y, \hat{y}) = y \left( \frac{2}{\propensity} - 1 \right) f^*(1, \hat{y}) + (1-y) f^*(0, \hat{y}).  \label{eq:binary-upper-bound}
\end{equation}

\subsection{Upper Bounds for Normalized Multilabel Reductions}
We have formulated the normalized multilabel reductions in terms of the
variable
\begin{equation}
    T^*_i = \frac{\truelabels_i}{\sum_{j=1}^{\numlabels}\truelabels_j} = \frac{\truelabels_i}{1 + \sum_{j \neq i}\truelabels_j}.
\end{equation}
A naive attempt of correcting for the noisy labels is to replace $\truelabels$ by $\obslabels/\propensity$, given by
\begin{equation}
    \tilde{T}_i \coloneqq \frac{\obslabels_i / \propensity_i }{1 + \sum_{j \neq i} \obslabels_j / \propensity_j}, \label{eq:upper-bound-for-T}
\end{equation}
which is not unbiased. We can show, however, that this expression fulfills
\begin{align}
    \expect[\tilde{T}_i] &= \expect[ \frac{M_i \truelabels_i / \propensity_i }{1 + \sum_{j \neq i} M_j \truelabels_j / \propensity_j}] \\
    &= \expect[M_i / \propensity_i] \expect[ \frac{\truelabels_i}{1 + \sum_{j \neq i} M_j \truelabels_j / \propensity_j}] \tag{independence}\\
    &= \expect[ \expect[\frac{\truelabels_i}{1 + \sum_{j \neq i} M_j \truelabels_j / \propensity_j} \Bigm| \vect{\truelabels}]] \tag{tower}\\
    &= \expect[\truelabels_i \expect[\frac{1}{1 + \sum_{j \neq i} M_j \truelabels_j / \propensity_j} \Bigm| \vect{\truelabels}]] \tag{measurable factor} \\
    \shortintertext{Now we can apply Jensen's inequality for the second factor}
    \expect[\tilde{T}_i] &\geq \expect[ \frac{\truelabels_i}{1 + \expect[\sum_{j \neq i} M_j \truelabels_j / \propensity_j \mid \vect{\truelabels}]}] \tag{convexity}\\
    &= \expect[ \frac{\truelabels_i}{1 + \expect[\sum_{j \neq i} \truelabels_j \mid \vect{\truelabels}]}] \tag{masking model}\\
    &= \expect[ \frac{\truelabels_i}{1 + \sum_{j \neq i} \truelabels_j}].
\end{align}

\begin{remark}
Note that this is different from taking the naive formula and replacing
$\truelabels \leftarrow \obslabels/\propensity$, which results in an
expression that is not an upper bound:
\begin{equation}
    \frac{\obslabels_i/\propensity_j}{\sum_{j=1}^{\numlabels}\obslabels_j/\propensity_j} = \frac{\obslabels_i/\propensity_j}{\propensity_i^{-1} + \sum_{j \neq i}\obslabels_j/\propensity_j}.
\end{equation}
\end{remark}

The value of $T^*_i$ can appear in the expression for the loss function either
with positive or negative prefactor. 
Thus, an upper bound for \eqref{eq:ubloss-normalized-nondecomposable} is given by
\begin{equation}
     \tilde{f}(\vect{y}, \hat{\vect{y}}) \coloneqq h(\hat{\vect{y}}) + \sum_{i=1}^{\numlabels} \frac{y_i / \propensity_i}{1 + \sum_{j=1}^{\numlabels} y_j / \propensity_j} g_i(\hat{\vect{y}}).
\end{equation}

Applying to OvA-N and PAL-N, this results in the two expressions
\begin{align}
    \tilde{f}_{\mathrm{OvA-N}}(\vect{y}, \hat{\vect{y}}) &\coloneqq \sum_{i=1}^{\numlabels} \frac{y_i/ \propensity_i}{1 + \sum_{j=1}^{\numlabels} y_j / \propensity_j} (g_{\mathrm{BC}}(1, \hat{y}_i) - g_{\mathrm{BC}}(0, \hat{y}_i)) + g_{\mathrm{BC}}(0, \hat{y}_i) \\
    \tilde{f}_{\mathrm{PAL-N}}(\vect{y}, \hat{\vect{y}}) &\coloneqq \sum_{i=1}^{\numlabels} \frac{y_i / \propensity_i}{1 + \sum_{j=1}^{\numlabels} y_j / \propensity_j} g_{\mathrm{MC}}(i, \hat{\vect{y}}) \label{eq:pal-upper-bound}.
\end{align}
Compared to the unbiased estimates, these expressions have the advantage of
not incurring computational overhead, and are advantageous also from the
perspective of variance.


\FloatBarrier
\section{Evaluation Experiments}
In this section, we validate the result of
\autoref{thm:efficient-multilabel} experimentally using the example of recall.
We first provide a look at synthetic data, where propensities are exactly
known and we can calculate reference ground-truth values. This will show that
even in an ideal setting, if the propensities get too low, the variance of the
estimate increases so much that it can become unusable. Then we apply the
estimator to real predictions, demonstrating that it is applicable for some
datasets, but useless for others.

Consider a setting in which there are 100 different labels, which are
independent and each has a probability of 10\%. We randomly draw \num{10000}
ground-truth label vectors, and generate observed labels by removing according
to a propensity $p$ that is identical for all labels. The predictions are
generated by randomly choosing a label from the ground-truth. We calculate the
average per-example recall using the vanilla estimator, the unbiased
estimator, and the upper bound, and plot the results in \autoref{fig:simdata}.

\begin{figure}
    \centering
    \input{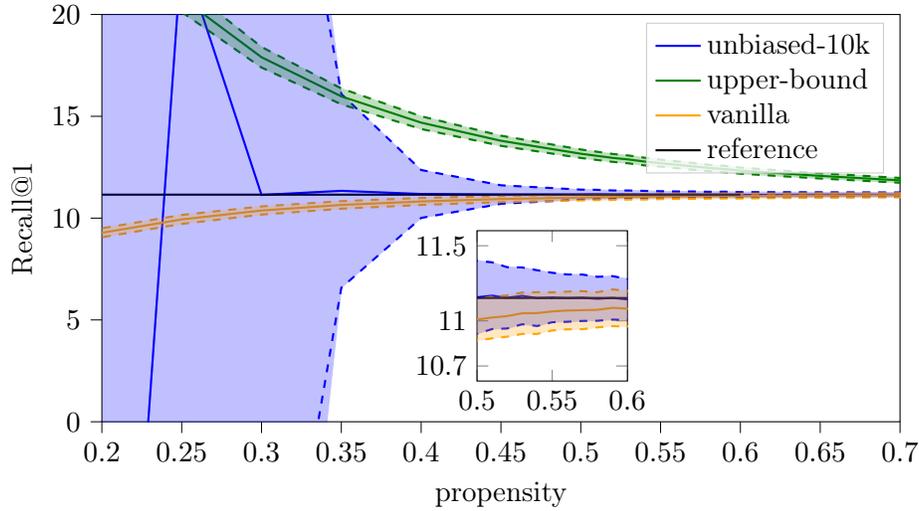}
    \caption{Unbiased estimate of per-example recall with artificial 
    data as described in the main text. The shaded region corresponds to one
    standard deviation, estimated over 100 repetitions. The black line denotes
    the true recall. }
    \label{fig:simdata}
\end{figure}

The figure shows that for propensities lower than \num{0.4}, the unbiased
estimate's variance becomes too large. The upper bound also becomes
unsuitable, because it deviates strongly from the actual value, and the
(biased) vanilla estimate ends up closest to the true value. For medium
propensity, the unbiased estimator still has little variance, but noticeable
better accuracy than the vanilla one. Together with the uniqueness
\autoref{thm:multi-unique}, these results suggest that for datasets with very
low propensity, unbiased estimates are ill-suited to calculate the per-example
recall.

\begin{table*}
    \centering
      \caption{Propensity-scored \eqref{eq:ps-rec} and vanilla
    \eqref{eq:def:recall} recall at $k$ for DiSMEC-style models trained using
    the squared hinge convex surrogate \citet{qaraei_convex_2021}. The Filter
    column indicates the fraction of data points that had to be removed as
    outliers to decrease the variance to reasonable levels. }
    \pgfplotstabletypeset[every head row/.style={
            before row={
            \toprule
            & \multicolumn{3}{c|}{PsRec@k}& \multicolumn{3}{c|}{Rec@k} & Filter \\
            },
            after row={\midrule}
        },
        every last row/.style={after row=\bottomrule},
        columns={Dataset,PsTrainPsMetric1,PsTrainPsMetric3,PsTrainPsMetric5,PsTrainVnMetric1,PsTrainVnMetric3,PsTrainVnMetric5,Filter},
        columns/Dataset/.style={string type, column type={l|}, column name={Dataset/k}},
        columns/PsTrainPsMetric1/.style={column name={1}, precision=1, fixed zerofill},
        columns/PsTrainPsMetric3/.style={column name={3}, precision=1, fixed zerofill},
        columns/PsTrainPsMetric5/.style={column name={5}, precision=1, fixed zerofill, column type={c|}},
        columns/PsTrainVnMetric1/.style={precision=1, fixed zerofill, column name={1}},
        columns/PsTrainVnMetric3/.style={precision=1, fixed zerofill, column name={3}},
        columns/PsTrainVnMetric5/.style={precision=1, fixed zerofill, column name={5}, column type={c|}},
        columns/Filter/.style={string type, column name={}},
    ]{plots/psrecall.txt}
    \label{tab:psrec}
\end{table*}
\label{sec:ev-psrec}
Looking at the results presented in \citet{repo}, one notices that recall
metrics are conspicuously absent. This is presumably because they are not
straightforward to compute. In \citet{Jain16} there is a section that argues
how to calculate this in cases where the total number of labels is available,
but for other cases this paper leaves a gap, which is filled by
\autoref{thm:efficient-multilabel}. Unfortunately, as the results with synthetic data
suggest, once the propensities become too low, the method becomes unusable.
This precludes its application to datasets like \texttt{Amazon-670K} or 
\texttt{EURLex-4K}.

To generate the predictions, we trained a DiSMEC \citep{dismec} model using a
convex surrogate unbiased loss function \citep{qaraei_convex_2021} for the
squared-hinge loss. Two difficulties arise when when calculating the unbiased
recall estimate: First, even if the average number of relevant labels is low,
there still can be some samples with a high number of true labels, which
becomes prohibitively expensive in computation. This can be handled by a
sampling approach. The second problem is that the unbiased estimate comes at
the cost of vastly increased variance. In fact, we observed that the mean is
dominated by very few outlier samples, and obtained nonsensical values.
Therefore, we filtered out values in the lowest and highest quantiles. This
results in a biased estimate, but gives much reduced variance. Details can be
found in supplementary \ref{ssec:app:pseval}.

\autoref{tab:psrec} shows that the recall values for vanilla and unbiased
estimation are close except for the Bibtex dataset. A  possible reason could
be that  the propensity model, which was empirically found for very large
datasets, does not fit appropriately for this small dataset. For other
datasets on the repository \citep{repo}, \texttt{Amazon-670K},
\texttt{WikiLSHTC-325K} and \texttt{EURLex-4K}, the variance in the unbiased
estimate is too large to get a meaningful result.

\section{Training Experiments}

Ideally, we would benchmark our loss functions on a task based on real data.
However, for those these we neither know the exact propensities, nor can we
validate that the unbiased estimates and upper bounds produce reasonable
results, since the fully-labeled ground truth is unknown.

\subsection{Experimental Setup}
Instead of using fully artificial data, we have chosen to construct a dataset
based on existing realistic data: We take the \texttt{AmazonCat-13k} data and
consider only the 100 most common labels, which are the ones with the highest
propensity according to the \citet{Jain16} model. We artificially remove
labels according to inverse propensity, which increases linearly based on the
ordering of label frequencies, such that the most common label has an inverse
propensity of two and the 100th most common one has an inverse propensity of
20. This process partially preserves the strong imbalances that are typical of
extreme classification datasets.

On this data, we perform the following experiment: We train a linear
classifier with $L_2$-regularization using different basis loss functions with
\textbf{a)} the vanilla version of the loss on clean training data and
\textbf{b)} noisy training data, as well as \textbf{c)} the unbiased version
and \textbf{d)} the upper-bound version on noisy data. For each training run,
we evaluate training loss, as well as the task losses precision and recall at
k, on noisy and clean training and test data. For the evaluation on noisy
data, the corresponding unbiased estimators are used.

By training with the vanilla loss, we get an upper bound how well the given
network could do without noise (config \textbf{a}) and how poorly it would
perform without any mitigations (config \textbf{b}). In the first setting, we
expect the calculated loss on clean and noisy data to match, since the model
cannot overfit to the specific noise pattern in the training data, whereas in
configurations \textbf{b)} to \textbf{d)} we expect to see a difference. As we
remove labels independently of features, the unbiased estimate on noisy test
data should be equal to the actual value on clean test data, if the test set
contains sufficient samples to result in accurate estimates.

We took 30\% of the original training data and used them as validation data to
determine the optimal value for the strength of $L_2$-regularization. As the
choice of loss function influences the bias-variance trade-off, this value
needs to be determined for each configuration. Note that when we train with
noisy data, we also use noisy data for validation, \ie we assume a setting in
which no clean data is available at all.

The network is optimized using Adam \citep{kingma2017adam} with an initial
learning rate of $10^{-4}$ for the first $15$ epochs and $10^{-5}$ for the
remaining five epochs, with a mini-batch size of 512.

\subsection{Overfitting to Sample and Noise Pattern}
\begin{figure}[tb]
    \centering
    \input{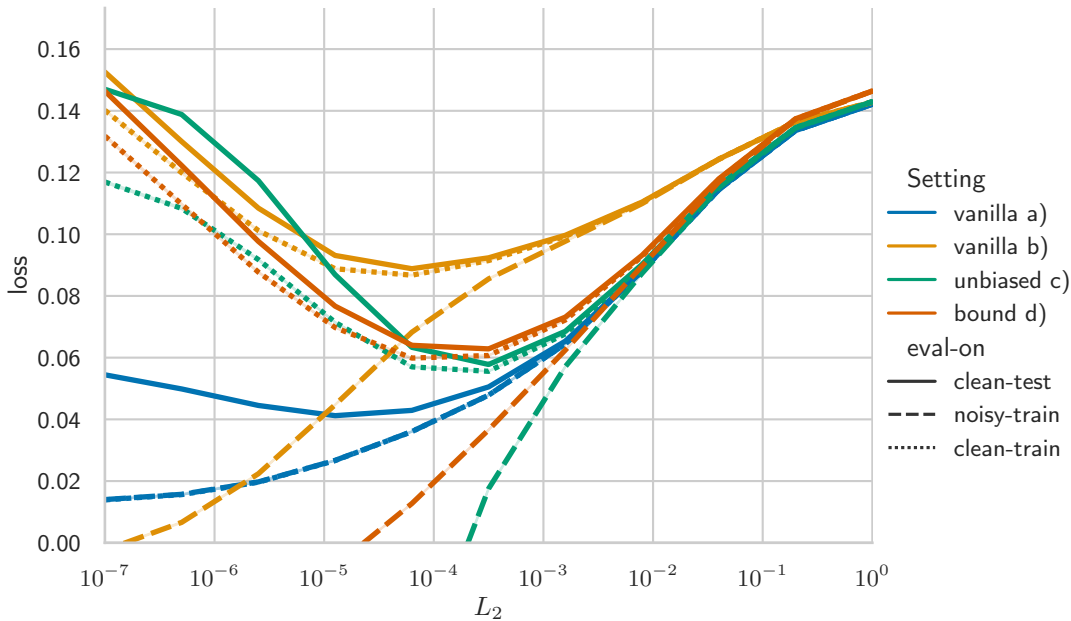}
    \caption{Binary cross-entropy for different regularization
    strengths, evaluated on noisy training data, clean training data, and
    clean test data. The gap between the dashed and the dotted lines
    corresponds to the overfitting to the noise pattern, the much smaller gap
    between dotted and solid lines shows the additional generalization gap due
    to the finite training sample.}
    \label{fig:bce-overfitting}
\end{figure}

In addition to the overfitting that is due to the finiteness of the training
data, the missing-labels setting causes additional overfitting because only a
single realization of the noise pattern is observed.

Let $h$ be a classifier that depends on the observed data $\obslabels$, and
$f^*$ a loss function with estimator $f = \pswt(f^*)$. Then the generalization
decomposes into
\begin{equation}
    \trisk_{f*}[h] - \oerisk_{f}[h] = \underbrace{\trisk_{f*}[h] - \terisk_{f*}[h]}_{\text{finite sample}} + \underbrace{\terisk_{f*}[h] - \oerisk_{f}[h]}_{\text{noise pattern}},
\end{equation}
the difference between the true risk $\trisk_{f*}[h]$  and the empirical risk
on clean training data $\terisk_{f*}[h]$, and the difference between that and
the estimated empirical risk on observed data $\oerisk_{f}[h]$. Because the
classifier $h$ depends (through $\obslabels = \vect{M} \odot \truelabels$) on
the mask variables, $f$ does not give an unbiased estimate (on training data)
and thus the second term is non-zero even in expectation.

In fact, in the linear classifier experiment described here, the noise-pattern
overfitting is much stronger than the overfitting due to finite sampling. 
\autoref{fig:bce-overfitting} shows this for the case of the BCE loss
in OvA-reduction, though the same effect can be seen also for the other loss
functions, see supplementary. For the classifier trained on clean data (blue),
the weights are independent of the noise pattern and thus the dashed and
dotted lines coincide in expectation.

We can see that the label noise acts as an implicit regularizer, in the sense
that for low regularization, the gap between clean test and clean training
data is much lower when the classifier is trained on noisy data as compared to
clean training data.

\subsection{Results for Loss Minimization}
\begin{figure}[p]
\centering
\input{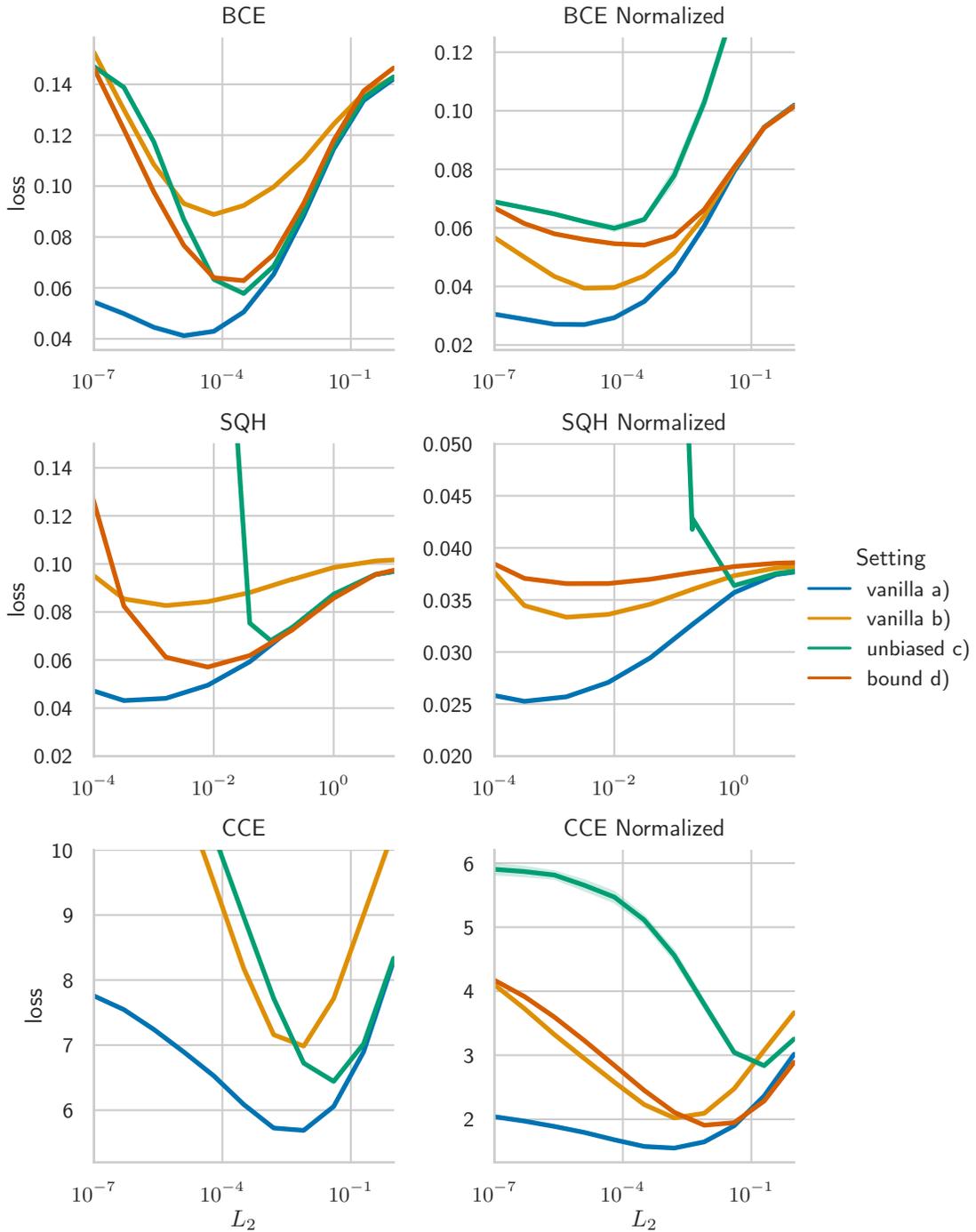}
\caption{Comparison of different loss functions, evaluated on clean test data,
in combination with different schemes for addressing missing labels. Details
are in the main text.}
\label{fig:test-loss-over-reg}
\end{figure}

As \autoref{fig:bce-overfitting} shows, for the case of OvA reduction using
the BCE loss, the training loss gets reduced much further using the unbiased
loss function or the upper-bound loss function than using the vanilla loss.
This decrease more than compensates the increase in generalization gap, and as
such the minimal loss, \ie the loss at optimal regularization, is better with
these two variants of the loss function.

Does this improvement also occur for other reductions and loss functions? This
question is addressed by \autoref{fig:test-loss-over-reg}, which plots the
loss on clean test data over varying $L_2$ regularization strength for
different base losses.

The first row of graphs shows BCE-based losses, with the left graph depicting
the same data as \autoref{fig:bce-overfitting}. The right graph is for the
normalized reduction, where the increased overfitting exceeds the improvement
in training loss, and overall the unbiased training produces worse results
than just training with the vanilla loss. We have also included an evaluation
using the upper-bound formula \eqref{eq:upper-bound-for-T}. Note that for the
normalized BCE loss, that formula does not result in an upper bound, but the
graph shows that it empirically works better than the unbiased loss, yet still
worse than vanilla.

A surprising feature of the normalized BCE result is that even for very strong
regularization, the unbiased estimate underperforms vanilla loss. We attribute
this to the non-convexity of \eqref{eq:ova-n-unbiased} because the increase in
loss is due to large training loss as opposed to the generalization gap. By
choosing different initial weights (see supplementary
\ref{seq:supp-norm-bce}), we can reduce the training loss for unbiased
training to that of vanilla training, supporting the hypothesis that
sub-optimal minima are the problem. Compared to the decomposable case (which
is also non-convex), the normalized case requires scaling by products of
several inverse propensities, and thus induces much larger prefactors in front
of the terms that cause the non-convexity, which might explain why the same
phenomenon does not occur in the decomposable reduction.

The second row shows squared-hinge based losses, which are also subject to the
One-vs-All reduction. The main difference between this loss and BCE is that
squared-hinge is even more susceptible to overfitting in the unbiased case.
Whereas for BCE the unbiased and upper-bound variations give roughly
comparable results, in this case the upper-bound loss proved to be much more
stable. In the normalized setting, the unbiased loss becomes very large for
even larger regularization strength ($\approx 10^{-1}$) compared to the
decomposable variant ($\approx 10^{-2}$). As in the BCE-normalized setting,
the expression in \eqref{eq:upper-bound-for-T} is not actually an upper bound,
and we can see empirically that it does not work well here. We again can
observe that the vanilla loss performs better than the unbiased loss.

The third row shows categorical cross-entropy, which differs from the other two
rows in that it results in a Pick-all-Labels reduction. In the non-normalized
case, the upper bound and unbiased loss are identical and outperform the
vanilla loss. The normalized case shows the same problems as above for the
unbiased loss, but the upper bound \eqref{eq:pal-upper-bound} successfully
improves on the vanilla loss.

In terms of the bias-variance trade-off, the graphs show a clear trend: The
optimal regularization for training on noisy data is larger than on clean
data. It is also larger when using the unbiased or upper-bound loss as
compared to vanilla loss.

\subsection{Task Losses}
\begin{figure}[tb]
    \centering
    \input{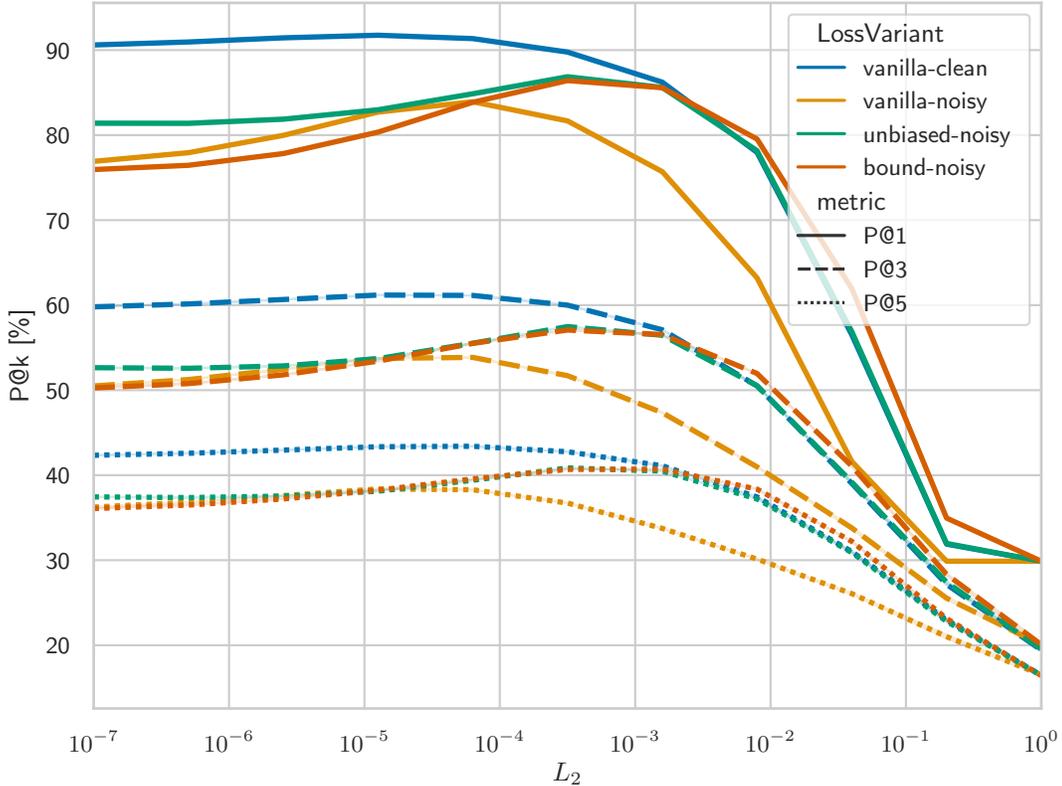}
    \caption{Precision at $k$ for training with binary cross-entropy
    (normalized reduction) for the different loss variations.}
    \label{fig:bce-Patk}
\end{figure}

Even though the training process on clean data is based on the optimization of
a decomposition into a differentiable loss, this is typically not the
quantity that is ultimately of interest. Instead, what we really want is a
maximization of precision or recall at the top. 

The behaviour of precision at k in dependence of the regularization is
depicted in \autoref{fig:bce-Patk} for the BCE loss. There are some notable
differences in the behaviour of the P@k metrics compared to the loss function:
Whereas the increased overfitting for low regularization results in a strong
deterioration in the loss function, the corresponding decrease in precision
metrics is relatively mild. For the loss function, the unbiased estimate works
better at higher regularization and worse at lower regularization than the
upper bound, but here the situation is reversed.

\begin{figure}[tb]
    \centering
    \input{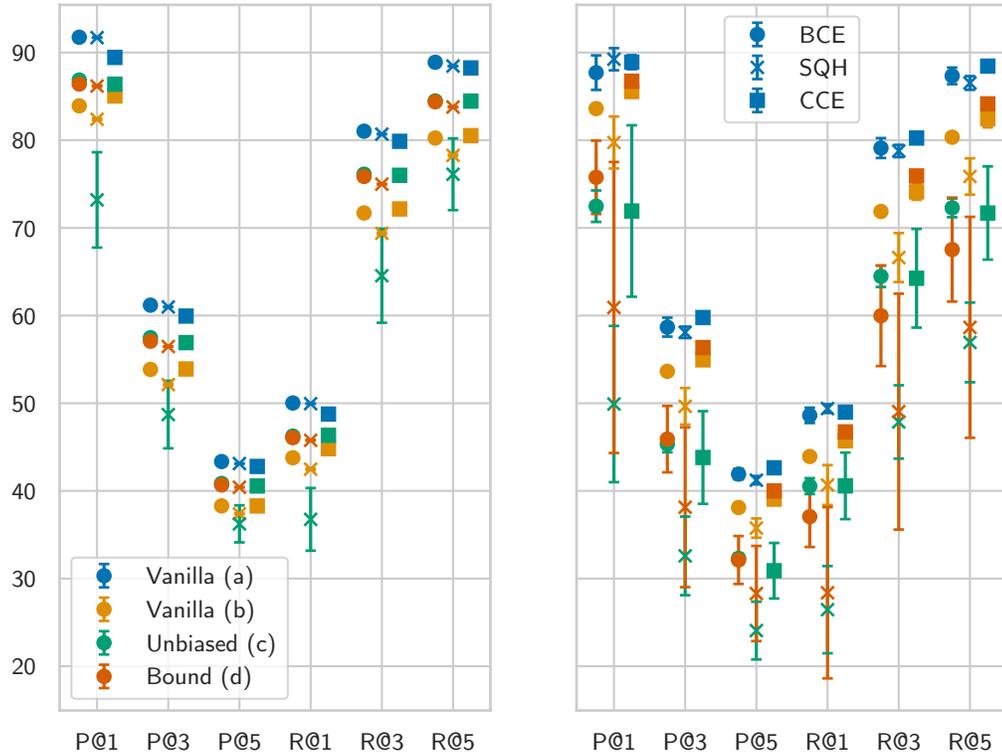}
    \caption{Precision and recall at the optimal (for loss minimization) regularization 
    strength for decomposable (left) and normalized (right) training losses.
    These values have been calculated on clean test data. The error intervals
    denote standard deviation and have been determined using 5 runs.}
    \label{fig:PRatK}
\end{figure}

The full results for precision and recall at the optimal regularization
parameter are presented in \autoref{fig:PRatK}. In this context, optimal is to
be understood as the value for which the unbiased estimate of the loss
function on noisy validation data is minimized. These values might be slightly
different than the optimal values for a specific P@k or R@k metric. The same
data, along with the corresponding value of the loss function and the optimal
regularization parameter, has been summarized in
\autoref{tab:amazoncat-results-recall}.

Comparing the decomposable and the normalized settings, we can see that the
normalized losses induce larger fluctuations (measured in standard deviation) for
the OvA settings and the unbiased PAL setting, with the exception of vanilla
training on noisy data (b) for BCE loss. This holds across all six task
losses. Note that this cannot be explained by the variance in the unbiased
estimators, as the values presented here have been calculated directly on the
clean ground-truth test data.

The graph also shows that the ordering of settings a)-d) is mostly stable
(for fixed loss function) over the six different metrics, but differs when
switching loss function. For OvA-BCE, the unbiased and upper-bound losses 
c) and d) are almost equal, but for OvA-SQH the unbiased performs far worse,
and with much larger fluctuations, which is consistent with the observations
in \autoref{fig:test-loss-over-reg}. For PAL-CCE, the two settings are the
same.

In the normalized cases, we can see that the OvA-N reduction results in
mostly the same ordering for both BCE and SQH: The vanilla loss b) is better
than applying \eqref{eq:pal-upper-bound} d), which in turn is better than the
unbiased loss c). The exception are the recall metrics for BCE, where the
unbiased loss appears to give better results.

For the PAL-N reduction we can see that the upper bound d) does result in
better task loss than using vanilla loss b), but as with the other normalized
setups the unbiased loss c) performs worse.

\begin{table}[tb]
\caption{
Training results on modified AmazonCat-13K data for using different loss
functions in their \textbf{V}anilla, \textbf{U}nbiased or
Upper-\textbf{B}ounded variants. BCE denotes the binary cross-entropy and SQH
the squared-hinge loss corresponding to a OvA decomposition, and CCE denotes
(softmax) categorical cross-entropy corresponding to a PaL decomposition. The
settings marked with a* denote reference runs on clean data.}
\label{tab:amazoncat-results-recall}
\centering
\pgfplotstabletypeset[
    col sep=comma,
    every head row/.style={
        before row={
        \toprule
        Setting & \multicolumn{3}{c|}{Precision}& \multicolumn{3}{c|}{Recall} & Loss & Reg. \\},
        after row={\midrule}
    },
    every last row/.style={after row=\bottomrule},
    columns={Setting,
        clean-test/P@1/mean,clean-test/P@3/mean,clean-test/P@5/mean,
        clean-test/R@1/mean,clean-test/R@3/mean,clean-test/R@5/mean,
        clean-test/loss/mean,config/l2_reg/mean},
    columns/Setting/.style={string type, column type={l|}, column name={}},
    columns/clean-test/P@1/mean/.style={column name={P@1}, precision=1,fixed,fixed zerofill},
    columns/clean-test/P@3/mean/.style={column name={P@3}, precision=1,fixed,fixed zerofill},
    columns/clean-test/P@5/mean/.style={column name={P@5}, precision=1,fixed,fixed zerofill, column type={c|}},
    columns/clean-test/R@1/mean/.style={column name={R@1},  precision=1,fixed,fixed zerofill},
    columns/clean-test/R@3/mean/.style={column name={R@3},  precision=1,fixed,fixed zerofill},
    columns/clean-test/R@5/mean/.style={column name={R@5},  precision=1,fixed,fixed zerofill, column type={c|}},
    columns/clean-test/loss/mean/.style={column name={}, column type={c|}, precision=1},
    columns/config/l2_reg/mean/.style={column name={}},
    every row no 8/.style={before row=\midrule},
    every row no 16/.style={before row=\midrule},
    every row no 1/.style={after row={[.6ex]}},
    every row no 3/.style={after row={[.6ex]}},
    every row no 5/.style={after row={[.6ex]}},
    every row no 9/.style={after row={[.6ex]}},
    every row no 11/.style={after row={[.6ex]}},
    every row no 13/.style={after row={[.6ex]}},
    every row no 17/.style={after row={[.6ex]}},
    every row no 19/.style={after row={[.6ex]}},
    every row no 21/.style={after row={[.6ex]}},
]{plots/decompositions/data.csv}
\end{table}

Finally, we want to know in which circumstances the normalized variations
perform better than the decomposable ones. The results in
\citet{menon_multilabel_2019} prove that in the asymptotic case the
normalized reductions are consistent for recall whereas the others are
consistent for precision. However, as seen above, despite the large number of
training instances in the dataset, overfitting is still a major problem, and
thus it may well be that a method that is consistent for recall also gets
better results in precision if its generalization gap is significantly
smaller.

From \autoref{tab:amazoncat-results-recall} we can see that for the BCE loss
the normalized reduction generally results in worse performance for both
precision and recall, with the notable exception of using vanilla loss on
noisy data, where there is a slight increase in recall. Nonetheless, using
unbiased or upper-bound losses for the decomposable reduction results in
overall better performance. For the squared hinge (SQH) loss, the normalized
reduction performs worse for all variations and all metrics.

For the decomposable CCE reduction, the upper bound is equal to the unbiased
loss. Compared with the bound for the normalized reduction, the latter gives
better values in P@1 and R@1, but worse for the other metrics. The unbiased
normalized reduction performs much worse across all metrics. For the vanilla
loss , the results are as expected according to \citet{menon_multilabel_2019}
with the normalized version yielding better recall. Interestingly, when used
on noisy data, it also gives better precision.

\FloatBarrier

\section{Related Work}
\label{sec:related}
\paragraph{Unbiased Estimates for Noisy Labels}
Learning with missing labels is a specific instance of learning with
class-conditional noise. For the case of binary labels, unbiased estimates of
the loss function can be found in \citet{natarajan_cost-sensitive_2017}. Note
that the generalization bound given therein (Lemma 8) is missing a factor of
max norm of the adapted loss function $\|\tilde{\lossfn}\|_{\infty}$, \cf
\autoref{thm:genbound}.

An even more general approach is given in \citet{van_rooyen_theory_2017}. In
their notation, $f$ is a function and $\mathds{P}$ the probability
distribution over clean data, that is transformed by the invertible operator
$\mathsf{T}$ into a \emph{corrupted} probability distribution. Let
$\mathsf{R}$ be the inverse of $\mathsf{T}$, and $\mathsf{R}^*$ its adjoint,
then it holds
\begin{equation}
    \langle \mathds{P}, f \rangle = \langle \mathsf{R} \circ \mathsf{T}(\mathds{P}), f \rangle = \langle \mathsf{T}(\mathds{P}), \mathsf{R}^*(f) \rangle.
\end{equation}
This equation forms the basis for their \enquote{Theorem 5 (Corruption
Corrected Loss)}, which states that a \emph{corruption corrected} function
$l_{\mathrm{R}}$ is given by
\begin{equation}
    l_{\mathrm{R}}(\cdot, a) = \mathsf{R}^*(l(\cdot, a)) \; \forall a \in \mathcal{A}, \label{eq:van-rooyen-unbiased}
\end{equation}
where $\mathcal{A}$ denotes the set of possible actions that will be evaluated
by the loss functions. For a finite label space with $n$ possible, the
operator $\mathsf{R}^*$ can be represented with an $n \times n$ matrix.

In the specific case of multilabel classification, our results
\autoref{thm:efficient-multilabel} show that out of the $2^l$ possible
values of the label vector that are necessary to evaluate
\eqref{eq:van-rooyen-unbiased} in the general case, in fact only these that
are a subset of the observed labels need to be taken into account, thus
requiring only $2^{\|y\|_0}$ evaluations.

\paragraph{Robust Loss-Functions}
An even more robust approach than choosing a loss function which compensates
for noisy labels is to use a learning algorithm that is inherently noise
tolerant. This has the advantage that one does not need to estimate the noise
rate, and cannot introduce additional error by misspecification. For symmetric
label noise with rates less than 50\%, \citet{ghosh_making_2015} proved that
methods minimizing losses which fulfill a symmetry condition $f(0, \cdot) + f(1,
\cdot) = c$ for some constant $c$ are noise tolerant. Certain performance
objectives such as the balanced error or the AUC are noise robust even under
the more general setting of mutually contaminated distributions as shown in
\citet{menon_learning_2015}.

\paragraph{Data Re-Calibration}
A data re-calibration approach tries to identify from the training data which
samples are corrupted
\citep{NEURIPS2018_a19744e2,zheng2020error,pmlr-v80-jiang18c}. The co-teaching
approach \citep{NEURIPS2018_a19744e2} maintains two interacting networks: For
each minibatch, each network selects a subset of examples with low loss value,
which is assumed to indicate that these are clean instances. The weights of
the other network are then updated by training only on these selected
examples. Given that deep neural networks have been observed to initially fit
clean data and start overfitting on noise as the training process progresses
\citep{arpit_closer_2017}, they decrease the selected fraction of the
minibatch over time.

Some theoretical justification for these approaches is provided by
\citet{zheng2020error} in cases where the probability mass for instances very
close to the decision boundary is low. For noisy labels with transition
probabilities that are independent of the features, a sufficiently accurate
model for predicting the true class-conditional probability $\eta(\datapvar) =
\prob[\obslabels=1\mid\datapvar]$ can identify mislabeled samples. Based on
that, they developed a likelihood-ratio test to decide whether a label in the
training data should be flipped. Despite the theoretical foundation of their
approach, they still need some empirical adjustments to make the method work
in practice, \eg they introduced an additional \emph{retroactive loss} term in
order to stabilize training.

\paragraph{Post-Processing}
It is also possible to first train a scorer on the noisy data naively, from
which a classifier adapted to a given rate of missing labels can be
constructed by choosing an appropriate threshold. For a naive scorer that
predicts the class probabilities for each data point, the corrected threshold
is given in \citet{menon_learning_2015}. Similarly, the inference procedure of
probabilistic label trees can be adapted to take into account a propensity
model \citep{propensity-plt}.

Other approaches are to try to design new losses specifically tailored to
deal with missing labels, such as a group-lasso based formulation
\citep{bucak2011multi}.

\paragraph{Positive-Unlabeled Learning}
Learning with missing labels is highly related to the problem of learning from
positive and unlabeled (PU) data. This can be interpreted in two ways, the
\emph{censoring} setting which is identical to learning with missing labels,
and the \emph{case-control} setting in which the positive labels are drawn
independently from the unlabeled data \citep{elkan_learning_2008}. In
the latter case the marginal of the true labels of the training and test data
might be different. In that setting, instead of a noise rate, the class prior
$\pi$ needs to be known (or estimated), then a corrected loss function can be
determined as in \citet{du_plessis_convex_2015}. The appearing difficulties,
that non-negativity and convexity need not be preserved in the new loss, are
the same as in our setting
\citep{kiryo_positive-unlabeled_2017,chou_unbiased_2020}.

\paragraph{Semi-Supervised Learning}
A slightly different setting with missing labels is given by semi-supervised
learning. Here, for each example the values of only a (known) subset of the
labels are available, that is label can be one of three values 1 (positive),
-1 (negative), and 0 (unknown). If the loss function decomposes over labels,
then one strategy for coping with this situation, taken in
\citet{yu2014large}, is to only sum up the contributions where the label is
known, \ie the unknown labels are masked out.

\section{Summary and Discussion}

We have shown that the modelling of missing label learning problems using a
mask variable provides an easy way of deriving unbiased estimators for both
the binary and the multilabel setting, if labels go missing independently.
These unbiased estimators are unique, and may show undesired properties:
Even if the original loss function was convex and lower-bounded, the estimate 
can be non-convex and unbounded. Even in a pure evaluation setting, where these
properties are not required, increasing variance as the propensity decreases
poses a significant problem and may preclude the use of unbiased estimates.

As a mitigation, we propose to use convex upper-bounds. For the binary case we
can write down a general solution given in \eqref{eq:binary-upper-bound}. In
the multilabel setting, we have considered four important cases that arise out
of the multilabel reductions given in \citet{menon_multilabel_2019}.
Particularly favourable among them is the Pick-all-Labels reduction, as it
directly leads to a convex function. For its corresponding normalized form, we
still can construct a convex upper-bound. In the One-vs-All case without
normalization, the binary convex upper-bound can be applied, but finding a
bound with normalization is still an open problem. An overview of the reductions
is given in \autoref{tab:reduction-overview}.

\begin{table}
\caption{Overview of multilabel loss reductions. }
\label{tab:reduction-overview}
\centering
\begin{tabular}{l|ccc}
    \toprule
    Reduction       & Base          & Consistency   & Convexity \\ \midrule
    One-vs-All      & Binary        & Precision     & Upper-Bound \\
    OvA-Norm        & Binary        & Recall        & \textbf{?} \\
    Pick-all-Labels & Multiclass    & Precision     & Yes \\
    PaL-Norm        & Multiclass    & Recall        & Upper-Bound \\ \bottomrule
\end{tabular}
\end{table} 

These results suggest that asymptotically, PaL reductions are preferable over
OvA reductions. In practice, however, the situation is less clear. In our
experiments we observed that unnormalized OvA produced the best results in
terms of recall, even though this loss is in fact not consistent for recall.
The most clear recommendation that can be drawn from our results is that if
you want to use a normalized reduction, PaL-Norm is to be preferred over
OvA-Norm because we currently lack a convex surrogate for the latter.

As suggested by theoretical results (\autoref{thm:genbound}) and corroborated
by the experiments, missing labels lead to a shift in the bias-variance
trade-off. If the data has missing labels, more regularization is required,
irrespective of whether training uses vanilla-, unbiased-, or convex
upper-bound losses. Looking more closely at the overfitting phenomenon, we
found that the generalization error can be split into two parts: the
difference between the empirical errors on the noisy and the clean (finite)
data, and the difference between the clean empirical error and the true risk.
We found that the overfitting to the specific noise pattern substantially
exceeded the overfitting to the finite sample. Our findings agree with the
observation of \citet{arpit_closer_2017} which found that typical regularizers
prevent a deep network from memorizing noisy examples, while not hindering 
the learning of patterns from clean instances. 

Due to the uniqueness results, the problems mentioned above are unavoidable
when using unbiased estimates. This suggests that further research should look
into loss functions that allow tuning the trade-off between bias and variance.
Having a slight bias in the loss function would usually not be problematic,
especially considering that the propensity values which we have assumed to be
given in this paper will in practice actually be only estimates, so that the
unbiased estimates derived here will also have a slight bias due to
misspecification.

\section*{Acknowledgements}
We would like to thank Krzysztof Dembczynski, Marek Wydmuch, Mohammadreza
Qaraei, and Thomas Staudt for discussions and feedback on earlier drafts of
this paper.

\bibliographystyle{icml2021}
\bibliography{lit}

\appendix

\onecolumn

\section{Examples of Loss Functions}
\label{app:losses}

\subsection{Multilabel Losses}
\begin{remark}[Computational Complexity]
    \label{rem:cc}
    The computation required to calculate the unbiased estimate for a general multilabel loss according to \autoref{thm:efficient-multilabel} scales exponentially with the number of observed labels.
    For large label spaces, this is typically far less than the number of true labels \citep{Jain16}. If
    we assume this to be $O(\log(l))$ and have $f^*$ be computable in $O(l^k)$, 
    then we need computation on the order of
    \begin{equation}
        2^{O(\log(l))} \cdot \left( O(\log(l)) + O(l^k) \right) = O\left(l^{1+k}\right).
    \end{equation}
\end{remark}

\paragraph{Per-Example Recall}
\label{appendix:per-example-recall}
Applying the general solution from \autoref{thm:efficient-multilabel} to the
definition of per-example Recall as given in equation
\eqref{eq:def:recall} results in 
\begin{equation}
    \operatorname{Recall}(\vect{y}, \vect{\hat{y}}) = \left(\prod_{i \in \mathcal{I}(\vect{y})} \hspace{-1.5ex} \frac{p_i - 1}{p_i} \right) \cdot \sum_{\mathclap{\mathcal{J} \subset \mathcal{I}(\vect{y})}} |\mathcal{J}|^{-1} \! \left( \sum_{k \in \mathcal{J}} \hat{y}_k \right) \! \prod_{j \in \mathcal{J}} \left( p_j - 1\right)^{-1}.
\end{equation}

Note that the predictions $\vect{\hat{y}}$ usually are a sparse vector, \eg
when calculating recall@k, there are exactly $k$ nonzero entries. This may
allow for a slightly more efficient calculation. We
denote the set of correct predictions as $\mathcal{S}
\coloneqq \mathcal{I}(\vect{y}) \cap \mathcal{I}(\vect{\hat{y}})$ and the
missed labels as $\mathcal{T} = \mathcal{I}(\vect{y}) \setminus \mathcal{S}$,
such that $\mathcal{I}(\vect{y}) = \mathcal{S} \cup \mathcal{T}$ and the sum
over subsets of $\mathcal{I}(\vect{y})$ can be written as a nested sum for
$\mathcal{S}$ and $\mathcal{T}$. For convenience, we abbreviate the common
factor as $c(\vect{y}) \coloneqq \prod_{i \in \mathcal{I}(\vect{y})} \frac{p_i - 1}{p_i}$, and set $d(U) \coloneqq \prod_{j \in \mathcal{U}} \left( p_j - 1\right)$. 
This results in 
\begin{equation}
     \operatorname{Recall}(\vect{y}, \vect{\hat{y}}) 
    = c(\vect{y}) \sum_{\mathclap{\substack{\mathcal{U} \subset \mathcal{T} \\ \mathcal{V} \subset \mathcal{S}}}} \frac{|\mathcal{V}|}{|\mathcal{U}| + |\mathcal{V}|} \left( \prod_{j \in \mathcal{U}} \frac{1}{p_j - 1} \right) \left( \prod_{k \in \mathcal{V}} \frac{1}{p_k - 1} \right) 
    = c(\vect{y}) \sum_{\mathclap{\mathcal{V} \subset \mathcal{S}}} \frac{|\mathcal{V}|}{d(\mathcal{V})} \sum_{\mathclap{\mathcal{U} \subset \mathcal{T}}} \frac{d(\mathcal{U})^{-1} }{|\mathcal{U}| + |\mathcal{V}|}.
\end{equation}

The second sum is almost independent of the first: If the number of elements
in $\mathcal{V}$ were constant, we could change the nested sums into a product
of two single sums. Therefore, we collect terms based on the number of
elements in $\mathcal{V}$ and rearrange to recover the result given in the
main text
\begin{equation}
    \operatorname{Recall}(\vect{y}, \vect{\hat{y}}) =
      c(\vect{y}) \sum_{s=1}^{|\mathcal{S}|} \sum_{\substack{\mathcal{V} \subset \mathcal{S} \\|\mathcal{V}| = s}} \frac{s}{d(\mathcal{V})} \cdot \sum_{\mathclap{\mathcal{U} \subset \mathcal{T}}} \frac{d(\mathcal{U})^{-1} }{|\mathcal{U}| + s} 
    = c(\vect{y}) \sum_{s=1}^{|\mathcal{S}|} 
    \left(\sum_{{\mathcal{U} \subset \mathcal{T}}} \frac{d(\mathcal{U})^{-1} }{|\mathcal{U}| + s} \right) \sum_{\substack{\mathcal{V} \subset \mathcal{S} \\|\mathcal{V}| = s}} \frac{s}{d(\mathcal{V})}.
\end{equation}

\paragraph{Pairwise Loss Functions}
We can also consider loss functions of the form
\begin{equation}
    f^*(\vect{y}, \vect{\hat{y}}) = \sum_{i = 1}^l \sum_{j=i+1}^l g_{y_i, y_j}(\hat{y}_i, \hat{y}_j) = \sum_{a, b=0}^1 \sum_{i = 1}^l \sum_{j=i+1}^l \ind[y_i=a] \ind[y_j=b] g_{a, b}(\hat{y}_i, \hat{y}_j)
\end{equation}
for four given functions $\defmap{g_{ab}}{\predictspace \times
\predictspace}{\real}$. As before, we can rewrite the indicators as linear
functions $\ind{y=a} = y a + (1-y)(1-a)$ since the components of the label
vector $\vect{y}$ are either zero or one. This leads to the expression
\begin{multline}
    f^*(\vect{y}, \vect{\hat{y}}) = \sum_{a, b=0}^1 \sum_{i = 1}^l \sum_{j=i+1}^l [y_ia + (1-y_i)(1-a)][y_j b + (1-y_j)(1-b)] g_{a, b}(\hat{y}_i, \hat{y}_j) \\
    = \sum_{a, b=0}^1 \sum_{i = 1}^l \sum_{j=i+1}^l [y_i(2a-1) + 1 - a)][y_j (2b-1) + 1-b] g_{a, b}(\hat{y}_i, \hat{y}_j).  \label{eq:pairwise-loss}
\end{multline}

Instead of applying \autoref{thm:efficient-multilabel}, we can use that the
preceding equation is linear in each label separately (the summation limits
exclude $i=j$), so by the independence of the mask variables we can directly
write down
\begin{multline}
    f(\vect{y}, \vect{\hat{y}}) = \sum_{a, b=0}^1 \sum_{i = 1}^l \sum_{j=i+1}^l \left(\frac{y_i}{p_i}(2a-1) + 1 - a\right) \left(\frac{y_j}{p_j}(2b-1) + 1-b\right) g_{a, b}(\hat{y}_i, \hat{y}_j) \\
    = \sum_{i = 1}^l \sum_{j=i+1}^l p_i^{-1}p_j^{-1} \sum_{a, b=0}^1 \left((2a-1)y_i+ p_i(1 - a) \right) \left((2b-1)y_j + p_j(1-b)\right) g_{a, b}(\hat{y}_i, \hat{y}_j).
\end{multline}

An example is the Kendall-Tau loss \citep[p. 202]{shalev2014understanding}, which is used 
for ranking applications and counts how many pairs of labels are ranked differently in the
prediction than in the ground truth. Multilabel classification can be interpreted as a 
form of bipartite ranking, where no loss is incurred if both labels have the same ground-truth
independent of their predictions \citep{bucak2009efficient}. In that case, we have
\begin{align}
    g_{00}(p, q) &= 0 &         g_{10}(p, q) &= \ind[q>p] & \\
    g_{01}(p, q) &= \ind[p>q] & g_{11}(p, q) &= 0.
\end{align}
Using the zeros for equal ground-truth ranking, the unbiased estimate can be simplified to
\begin{equation}
    \operatorname{KT}(\vect{y}, \vect{\hat{y}}) 
    = \sum_{i = 1}^l \sum_{j=i+1}^l p_i^{-1}p_j^{-1} \left(p_i - y_i\right) y_j g_{01}(\hat{y}_i, \hat{y}_j) + y_i \left(p_j - y_j\right) g_{10}(\hat{y}_i, \hat{y}_j),
\end{equation}
where in a training application the discontinuous $g_{01}, g_{10}$ from above
would be replaced by a convex upper-bound. Note that, contrary to the general
solution as given by \autoref{thm:efficient-multilabel}, this equation
contains only the product of two inverses of the propensity. Thus, we can
expect that the increase in variance should be less severe than the one we
observed for recall, but more than we get for the binary losses. OF course,
due to the uniqueness \autoref{thm:multi-unique}, when plugging in the
definition of a pairwise loss, the general solution will be the same as the
one given here. The reason we did not take this approach is that it would be
more work to try and simplify the sum-over-subsets structure than it is to
derive the solution from scratch, though using the same principles, \ie the
independent mask and the linearization that is possible because the label set
is discrete.

\section{Theorems and Proofs}
\label{app:proofs}
\subsection{Unbiased Estimates for Binary Setting}
\thmbingrad*
\begin{proof}
    \label{proof:corbingrad}
    The independence of $\datapvar, \vect{W}$ and $M$ implies the independence
    of $\phi(\datapvar; \vect{W})$ and $M$, so the first equation
    follows directly from \autoref{thm:single_label}. For the second, 
    set $\dataspace^\prime = \dataspace \times \real^k$, $X^\prime = (X, \vect{W})$ and define 
    \begin{align}
        \defmap{g^*}{\labelspace \times \dataspace^\prime&}{\real^k} \\
        (y, (x, \vect{w})) &\mapsto \nabla_{\vect{w}} \ell^*(y, \phi(x, \vect{w})).
    \end{align}
    We can apply \autoref{thm:single_label} to $X^\prime$, where we set $g =
    \pswt(g^*)$. It remains to be shown that $g$ has the required structure,
    which follows from the linearity of the derivative through
    \begin{align}
        g(1, x, \vect{w}) &= p^{-1} \left(g^*(1, x, \vect{w}) + (p-1) g^*(0, x, \vect{w}) \right) \\                                       
                   &= p^{-1} \left(\nabla_{\vect{w}} \ell^*(1, \phi(x, \vect{w})) + (p-1) \nabla_{\vect{w}} \ell^*(0, \phi(x, \vect{w})) \right) \\       
                   &= \nabla_{\vect{w}} \left( p^{-1} \left( \ell^*(1, \phi(x, \vect{w})) + (p-1) \ell^*(0, \phi(x, \vect{w})) \right) \right) \\                   
                   &= \nabla_{\vect{w}} \ell(1, \phi(x, \vect{w})).                                                                        
    \end{align}
\end{proof}

\thmpsuniqueness*
\begin{proof}
\label{proof:uniqueness}
    The fact that $\pswt$ fulfills the condition \eqref{eq:unbiased-operator}
    follows from \autoref{thm:single_label}. Because $\expect[Y] = q$, the
    additional term has expectation zero.

    Let $\mathfrak{P}$ be another operator for which the condition holds, then
    \eqref{eq:unbiased-operator} is in particular also fulfilled for distributions
    of the form
    \begin{equation}
        X, Y^* \sim q^* \delta(x_1, 1) + (1-q^*) \delta(x_2, 0),
    \end{equation}
    where $\delta(x, y)$ denotes the Dirac measure of point $(x, y)$ and $q^*
    \coloneqq q/p$. These are valid because $q \stackrel{!}{=} \prob[Y=1] =
    \prob[Y^*=1, M=1] = p \cdot q^* = q$.

    Let us further write $f=\mathfrak{P}(f^*)$, $f^*(y, x) = y g^*(x) + h^*(x)$
    and $f(x, y) = y g(x) + h(x)$. Since $y \in \set{0,1}$, the decomposition into
    $g$ and $h$ is always possible.

    In this notation, we can explicitly calculate the expectations
    \begin{align}
         \expect[f^*(Y^*, X)] &= \expect[Y^* g^*(X) + h^*(X)] = q^*(g^*(x_1) + h^*(x_1)) + (1-q^*)h^*(x_2) \\ 
         \expect[f(Y, X)] &= \expect[M Y^* g(X) + h(X)] = q^*(p g(x_1) + h(x_1)) + (1-q^*)h(x_2).
    \end{align}
    By assumption on $\mathfrak{P}$ these two are equal:
    \begin{gather}
        q^*(g^*(x_1) + h^*(x_1)) + (1-q^*)h^*(x_2) = q^*(p g(x_1) + h(x_1)) + (1-q^*)h(x_2) \\ 
        q^*(g^*(x_1) + h^*(x_1) - h(x_1)) + (1-q^*)(h^*(x_2) - h(x_2)) = p \, q^* g(x_1). \label{eq:unq-proof:cond}
    \end{gather}
    Setting $x_1 = x_2 \eqqcolon x$ gives 
    \begin{equation}
        q^* (g^*(x) - p g(x)) = h(x) - h^*(x), \label{eq:unq-proof:h}
    \end{equation}
    which can be plugged back into \eqref{eq:unq-proof:cond}
    \begin{align}
        q^*(g^*(x_1) - q^* (g^*(x_1) - p g(x_1))) + (1-q^*)(q^* (g^*(x_2) - p g(x_2))) &= p q^* g(x_1) \\
        (1 - q^*) g^*(x_1) + (q^* - 1) p g(x_1) + (1-q^*)(g^*(x_2) - p g(x_2)) &= 0 \\
        g^*(x_1) - p g(x_1) + g^*(x_2) - p g(x_2) &= 0.
    \end{align}
    Since this equation holds for all $x_1, x_2 \in \mathcal{X}$, it determines $g$ up to a constant shift.
    Let $x_0 \in \mathcal{X}$ be fixed and denote $c=g^*(x_0)/p -  g(x_0)$, then for arbitrary $x \in \mathcal{X}$
    \begin{equation}
         g(x) = g^*(x)/p + c.
    \end{equation}
    Putting this back into \eqref{eq:unq-proof:h} gives
    \begin{equation}
        h(x) = h^*(x)- p c q^* = h^*(x) - q c.
    \end{equation}
    Combining these two expressions shows the claim
    \begin{equation}
        \mathfrak{P}(f^*)(y, x) = y g(x) + h(x) = \frac{y g^*(x)}{p} + y c + h^*(x) - q c = \frac{y}{p} g^*(x) + h^*(x) + (y - q) c.
    \end{equation}
\end{proof}

\subsection{Generalization Bound}
In this section we present a proof for \autoref{thm:genbound}. To that end, we first proof
some helper results as presented below:
\begin{theorem}
    \label{thm:cbound-binary}
    Let $\functionclass \subset \allmaps{\dataspace}{\predictspace}$ be a
    function class, and $\defmap{f}{\labelspace \times \predictspace}{[a, b]}$
    be a bounded, $\rho$-Lipschitz continuous (in the second argument) 
    function. Denote $\tilde{f} \coloneqq \pswt f$. Given a sample of $n$ noisy training points, it holds with probability
    of at least $1-\delta$ that
    \begin{align}
         \sup_{h \in \functionclass} \left( \oerisk_{\tilde{f}}[h] - \orisk_{\tilde{f}}[h] \right) &\leq \frac{4-2p}{p} \rho \rademacher_n(\functionclass) + \frac{(2 - p)(b-a)}{p} \sqrt{\frac{\log(1/\delta)}{2n}} \\
         \sup_{h \in \functionclass} \left( \orisk_{\tilde{f}}[h] - \oerisk_{\tilde{f}}[h] \right) &\leq \frac{4-2p}{p} \rho \rademacher_n(\functionclass) + \frac{(2 - p)(b-a)}{p} \sqrt{\frac{\log(1/\delta)}{2n}}.
    \end{align}
\end{theorem}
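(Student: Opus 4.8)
\textbf{Proof plan for Theorem~\ref{thm:cbound-binary}.}

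The plan is to reduce the bound to the standard uniform-deviation bound via Rademacher complexity (as in \citet{shalev2014understanding}, Ch.~26), applied to the composed loss class $\{x \mapsto \tilde f(y, h(x)) : h \in \functionclass\}$, and then track how the PS operator $\pswt$ inflates both the Lipschitz constant and the range of the loss. First I would write the standard result: for a family of functions with values in an interval of length $L$ and with the composed class having Rademacher complexity bounded via the contraction lemma by $\rho' \rademacher_n(\functionclass)$ where $\rho'$ is the Lipschitz constant of $\tilde f$ in its second argument, one has with probability $\ge 1-\delta$
\begin{equation*}
    \sup_{h \in \functionclass}\bigl(\orisk_{\tilde f}[h] - \oerisk_{\tilde f}[h]\bigr) \le 2\rho' \rademacher_n(\functionclass) + L\sqrt{\frac{\log(1/\delta)}{2n}},
\end{equation*}
and symmetrically with the roles of empirical and true risk swapped (this needs McDiarmid's inequality applied in the other direction, which is routine).

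The second and main substantive step is to bound $\rho'$ and $L$ for $\tilde f = \pswt f$. From Definition~\ref{def:ps-operator}, $\tilde f(y, z) = y\,\frac{\opl(z) + (p-1)\oml(z)}{p} + (1-y)\oml(z)$. For $y=0$ this is just $\oml$, which is $\rho$-Lipschitz and ranges in $[a,b]$. For $y=1$ it equals $p^{-1}\opl(z) + (1-p^{-1})\oml(z)$; writing $p^{-1} = 1 + (1-p)/p$ and $1-p^{-1} = -(1-p)/p$, this is a convex-like combination with coefficients $1+\frac{1-p}{p}$ and $-\frac{1-p}{p}$, whose absolute values sum to $1 + \frac{2(1-p)}{p} = \frac{2-p}{p}$. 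Hence the Lipschitz constant of $\tilde f(1,\cdot)$ is at most $\frac{2-p}{p}\rho$, and its range has length at most $\frac{2-p}{p}(b-a)$; taking the max over $y\in\{0,1\}$ gives $\rho' \le \frac{2-p}{p}\rho$ and $L \le \frac{2-p}{p}(b-a)$ (note $\frac{2-p}{p}\ge 1$ for $p\in(0,1]$, so these dominate the $y=0$ case). Plugging $\rho'$ into the $2\rho'\rademacher_n$ term gives $\frac{2(2-p)}{p}\rho\rademacher_n(\functionclass) = \frac{4-2p}{p}\rho\rademacher_n(\functionclass)$, and plugging $L$ into the concentration term gives $\frac{(2-p)(b-a)}{p}\sqrt{\log(1/\delta)/(2n)}$, matching the claimed bounds for both directions.

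I expect the main obstacle to be handling the contraction (Ledoux--Talagrand) step cleanly when the loss depends on $y$, which is a random label rather than a fixed one: one has to argue that the per-sample Lipschitz constant is uniformly bounded by $\rho'$ regardless of the realized $y_i$, so that the vector contraction / peeling argument applies sample-by-sample; this is a standard maneuver but worth stating carefully. A minor additional subtlety is that $\tilde f(1,\cdot)$ need not be nonnegative even if $f$ is, but since the Rademacher bound only needs a \emph{bounded-range} loss (not a nonnegative one), the range-length computation above is all that is required; I would note this explicitly. The symmetric (second) inequality is obtained identically by applying McDiarmid to $-\bigl(\oerisk_{\tilde f} - \orisk_{\tilde f}\bigr)$ or, equivalently, by the standard two-sided symmetrization argument, so no new ideas are needed there.
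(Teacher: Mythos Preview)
Your proposal is correct and follows essentially the same route as the paper: compute the Lipschitz constant of $\tilde f$ as $\tfrac{2-p}{p}\rho$, compute its range length as $\tfrac{2-p}{p}(b-a)$, then apply the standard Rademacher uniform-deviation bound with contraction, and obtain the reverse inequality by negating the loss. The only cosmetic difference is that the paper first affinely rescales $\tilde f$ into $[0,1]$ before invoking the textbook bound and then multiplies the range factor back, whereas you state the range-$L$ version directly; these are equivalent.
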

\begin{proof}
    First, we determine the Lipschitz-constant of $\tilde{f}$. For $y=0$, it is the same as that of $f$, so we only need to consider the $y=1$ case.
    \begin{align}
        \abs{\tilde{f}(1, x_1) - \tilde{f}(1, x_2)} &= \frac{f(1, x_1) + (p-1)f(0, x_1) - f(1, x_2) - (p-1) f(0, x_2)}{p} \\
        &\leq \frac{1}{p} \abs{f(1, x_1) - f(1, x_2)} + \frac{1-p}{p} \abs{f(0, x_1) - f(0, x_2)} \label{eq:calc-rho-p-lt-1}\\ 
        &\leq \left( \frac{1}{p} + \frac{1-p}{p} \right) \rho \| x_1 - x_2 \|.
    \end{align}
    In \eqref{eq:calc-rho-p-lt-1} we made use of the fact that $0 < p \leq
    1$. This also implies that $\frac{2-p}{p} \geq 1$, and thus the Lipschitz
    constant of $\tilde{f}$ is given by $\frac{2-p}{p} \rho$.

    Next we calculate the range of $\tilde{f}$. We have $\forall x \in \dataspace$, that
    \begin{equation}
        \tilde{a} \coloneqq \frac{a + (p-1)b}{p} \leq a \leq \tilde{f}(1, x) \leq b \leq \frac{b + (p-1)a}{p} \eqqcolon \tilde{b}.
    \end{equation}
    Here, the first and last inequality follow from $p \in (0, 1]$ and $a \leq b$. Define $c \coloneqq \tilde{b} - \tilde{a}$.

    Finally, we can construct a function $\defmap{f_{01}}{\labelspace \times \predictspace}{[0, 1]}$ by the affine transformation
    $f_{01} = c^{-1} (\tilde{f} - \tilde{a})$ such that
    \begin{equation}
         \orisk_{\tilde{f}}[h] - \oerisk_{\tilde{f}}[h] = c \left( \orisk_{f_{01}}[h] - \oerisk_{f_{01}}[h] \right).
    \end{equation}

    The right hand side can be bounded with probability $1-\delta$ using \citet[Theorem 3.3]{mohri2018foundations} by
    \begin{equation}
        \orisk_{f_{01}}[h] - \oerisk_{f_{01}}[h] \leq \rademacher_n(f_{01} \circ \functionclass) + \sqrt{\frac{\log(1/\delta)}{2n}}.
    \end{equation}
    As the Lipschitz-constant of $f_{01}$ is $c^{-1} p^{-1}(2-p)$, by the contraction lemma \citep[Lemma 26.9]{shalev2014understanding} we have
    \begin{equation}
        \rademacher_n(f_{01} \circ \functionclass) \leq c^{-1} \frac{2-p}{p} \rho \rademacher_n(\functionclass).
    \end{equation}
    Thus with probability $1-\delta$ and $\forall h \in \functionclass$
    \begin{align}
        \orisk_{\tilde{f}}[h] - \oerisk_{\tilde{f}}[h] &\leq c c^{-1} \frac{2-p}{p} \rho \rademacher_n(\functionclass) + c \sqrt{\frac{\log(1/\delta)}{2n}} \\
        &= \frac{2-p}{p} \rho \rademacher_n(\functionclass) + \frac{(2-p)(b-a)}{p} \sqrt{\frac{\log(1/\delta)}{2n}}
    \end{align}

    The second bound follows by replacing $f$ with $-f$.
\end{proof}    

This result is very similar to \citet[Lemma 8]{natarajan_cost-sensitive_2017}.
However, that theorem is missing a scaling factor with the range of the loss
function, as argued below. For reference, the original statement of the theorem is
\begin{theorem}[{\citet[Lemma 8]{natarajan_cost-sensitive_2017}}]
    \label{thm:_natarajan}
    Let $l(t, y)$ be $L$-Lipschitz in $t$ (for every $y$). Then, for any $\alpha \in (0, 1)$,
    with probability at least $1-\delta$,
    \begin{equation}
        \max_{f \in \mathcal{F}} \abs{\hat{R}_{\tilde{l}_\alpha}(f) - R_{\tilde{l}_\alpha, D_\rho}(f)} \leq 2 L_\rho \rademacher_n(\mathcal{F}) + \mathbf{} \sqrt{\frac{\log(1/\delta)}{2n}},
    \end{equation}
    where $L_\rho \leq 2L/(1 - \rho_{+1} - \rho_{-1})$ is the Lipschitz constant of $\tilde{l}_{\alpha}$.
\end{theorem}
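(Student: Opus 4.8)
The plan is to read the statement as a uniform-convergence bound over the hypothesis class $\mathcal{F}$ and to prove it by the standard Rademacher-complexity route, which is exactly what generates the two-term shape $2L_\rho \rademacher_n(\mathcal{F}) + \sqrt{\log(1/\delta)/(2n)}$. First I would fix the corrected loss $\tilde{l}_\alpha$ and note that for each $f \in \mathcal{F}$ the empirical risk $\hat{R}_{\tilde{l}_\alpha}(f)$ is the average of the i.i.d.\ quantities $\tilde{l}_\alpha(f(X_i), \tilde{Y}_i)$ drawn from the corrupted distribution $D_\rho$, with common mean $R_{\tilde{l}_\alpha, D_\rho}(f)$. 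The object to control is the supremum $\sup_{f \in \mathcal{F}} \abs{R_{\tilde{l}_\alpha, D_\rho}(f) - \hat{R}_{\tilde{l}_\alpha}(f)}$.

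I would then invoke the packaged Rademacher generalization bound \citet[Theorem 3.3]{mohri2018foundations}, whose proof combines McDiarmid's bounded-differences inequality (giving the $\sqrt{\log(1/\delta)/(2n)}$ tail) with a symmetrization step (giving the leading factor $2$). Applied to the composed class $\tilde{l}_\alpha \circ \mathcal{F}$ this yields a bound in terms of $\rademacher_n(\tilde{l}_\alpha \circ \mathcal{F})$, which I would then reduce using Talagrand's contraction lemma \citet[Lemma 26.9]{shalev2014understanding}: because $\tilde{l}_\alpha$ is $L_\rho$-Lipschitz in its prediction argument, $\rademacher_n(\tilde{l}_\alpha \circ \mathcal{F}) \le L_\rho \rademacher_n(\mathcal{F})$, producing the $2L_\rho \rademacher_n(\mathcal{F})$ term. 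The Lipschitz constant $L_\rho$ follows from the explicit form of $\tilde{l}_\alpha$ as an $\alpha$-weighted affine combination of $l(\cdot, +1)$ and $l(\cdot, -1)$ with coefficients of size $O\!\left(1/(1-\rho_{+1}-\rho_{-1})\right)$; pushing the Lipschitz constant $L$ of $l$ through these coefficients gives $L_\rho \le 2L/(1-\rho_{+1}-\rho_{-1})$.

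The main obstacle is the tail term, and it sits in the hypotheses of the bound being invoked: \citet[Theorem 3.3]{mohri2018foundations} is stated for $[0,1]$-valued functions, so the displayed $\sqrt{\log(1/\delta)/(2n)}$ is correct precisely when $\tilde{l}_\alpha$ takes values in an interval of length at most one. I would therefore make this normalization of $\tilde{l}_\alpha$ explicit at the outset, since the correction enlarges the effective range of $l$ by a factor of order $1/(1-\rho_{+1}-\rho_{-1})$ and this range is what multiplies the bounded-differences constant in the concentration step. Under that normalization the argument closes as above; the companion \autoref{thm:cbound-binary} instead keeps the range factor visible throughout, which is the more conservative route when the normalization cannot be assumed. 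Everything else — symmetrization and the contraction peeling — is routine and transfers verbatim.
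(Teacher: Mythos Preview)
Your approach matches the paper's discussion closely, including the key observation. Note that the paper does not actually prove this statement: it quotes it verbatim from \citet{natarajan_cost-sensitive_2017} precisely in order to critique it. The text following the theorem in the paper explains why the original proof is incomplete, and your ``main obstacle'' paragraph recovers exactly that critique --- the basic Rademacher bound of \citet[Thm~3.3]{mohri2018foundations} requires the loss to take values in $[0,1]$, and since $\tilde{l}_\alpha$ has range inflated by a factor of order $1/(1-\rho_{+1}-\rho_{-1})$, the tail term as stated is missing that factor unless a normalization is assumed (which it is not in the original hypotheses). Your proposed fix of making the normalization explicit is one valid route; the paper's companion \autoref{thm:cbound-binary} takes the alternative you already mention, carrying the range factor through explicitly.

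The paper raises one further point you do not: the bound as stated is two-sided (an absolute value on the left), whereas \citet[Thm~3.3]{mohri2018foundations} and \citet[Thm~26.5]{shalev2014understanding} are one-sided. A union bound over the two directions is therefore needed, so $\delta$ should be replaced by $\delta/2$ inside the square-root term.
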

In the first step of the proof, they invoke a \textquote{Basic Rademacher bound between risks and empirical risks} that states
\begin{equation}
    \max_{f \in \mathcal{F}} \abs{\hat{R}_{\tilde{l}_\alpha}(f) - R_{\tilde{l}_\alpha, D_\rho}(f)} \leq 2 \rademacher_n(\tilde{l}_{\alpha} \circ \mathcal{F}) + \sqrt{\frac{\log(1/\delta)}{2n}}
\end{equation}
However, such a bound either requires the range of $\tilde{l}_{\alpha}$ to be
a subset of $[0, 1]$ \citep[Thm 3.3]{mohri2018foundations}, or introduces an
additional factor in front of the square root term as in \citet[Thm
26.5]{shalev2014understanding}. Also, they are using a two-sided bound instead
of a one-sided one as in the two cited theorems, which means that $\delta$
needs to be replaced with $\delta/2$ because the square-root term comes from
an application of Mc.\ Diamids inequality.


\begin{lemma}
    \label{lemma:lossriskdiff}
    For any $f \in \allmaps{\labelspace \times \dataspace}{\real}$, $p \in (0, 1]$ and $\expect[\obslabels] = q$, it holds
    \begin{equation}
        \expect[\abs{\pswt(f)(\obslabels, \datapvar) - f(\obslabels, \datapvar)}] \leq q \frac{1 - p}{p} m \quad \text{with}\quad m \coloneqq \sup_{x \in \dataspace}(\abs{f(1, x) - f(0, x)}).
    \end{equation}
\end{lemma}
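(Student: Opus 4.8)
The plan is to compute the left-hand side directly using the explicit form of the propensity-scoring operator and the observation that the difference $\pswt(f)(y,x) - f(y,x)$ vanishes when $y = 0$. First I would expand: by \autoref{def:ps-operator}, writing $f(y,x) = y\cpl(x) + (1-y)\cml(x)$ with $\cpl = f(1,\cdot)$ and $\cml = f(0,\cdot)$, we have
\begin{equation}
    \pswt(f)(y,x) - f(y,x) = y\left(\frac{\cpl(x) + (p-1)\cml(x)}{p} - \cpl(x)\right) = y\,\frac{(1-p)\bigl(\cml(x) - \cpl(x)\bigr)}{p}.
\end{equation}
So the difference is supported on $\{y=1\}$ and equals $y$ times a bounded quantity.

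Next I would take absolute values and the expectation. Since $y \in \{0,1\}$, we have $|y| = y$, hence
\begin{equation}
    \bigl|\pswt(f)(\obslabels,\datapvar) - f(\obslabels,\datapvar)\bigr| = \obslabels\,\frac{1-p}{p}\,\bigl|\cml(\datapvar) - \cpl(\datapvar)\bigr| \leq \obslabels\,\frac{1-p}{p}\,m,
\end{equation}
using $1-p \geq 0$ (as $p \in (0,1]$) and the definition of $m = \sup_{x}|f(1,x)-f(0,x)| = \sup_x |\cpl(x) - \cml(x)|$. Taking expectations and using $\expect[\obslabels] = q$ gives the claimed bound $q\,\frac{1-p}{p}\,m$.

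This is essentially a one-line argument once the operator is unwound, so I do not anticipate a real obstacle; the only minor care needed is to note that $\pswt(f) - f$ has no $(1-y)$-component (the negative-label branch is unchanged by the PS operator), which is what localizes the difference to $\{\obslabels=1\}$ and produces the factor $q$ rather than $1$. One should also double-check that $m$ is finite is not actually required for the inequality to be meaningful — if $m = \infty$ the bound is vacuous but still correct — so no additional hypothesis is needed. The statement and proof are purely deterministic bookkeeping plus one application of linearity of expectation.
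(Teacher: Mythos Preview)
Your proof is correct and follows essentially the same approach as the paper: both unwind the definition of $\pswt$, observe that the difference vanishes on $\{y=0\}$ and factors as $y\cdot\frac{1-p}{p}(f(1,x)-f(0,x))$, then bound and take expectations. (Note a harmless sign slip in your intermediate display --- it should be $\cpl(x)-\cml(x)$ rather than $\cml(x)-\cpl(x)$ --- but this is irrelevant once you pass to absolute values.)
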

\begin{proof}
    For notational convenience denote $U \coloneqq f(1, \datapvar)$ and $V \coloneqq f(0, \datapvar)$. Substituting $\pswt(f)$, difference can be simplified to
    \begin{align*}
        \MoveEqLeft
        \pswt(f)(Y, X) - f(Y, X)  \\
        &= \obslabels p^{-1} \left( f(1, \datapvar) + (p-1) f(0, \datapvar) \right) + (1 - \obslabels)f(0, \datapvar) - f(\obslabels, \datapvar)  \\
        &= \obslabels p^{-1} \left( U + (p-1) V \right) + (1 - \obslabels)V - \obslabels U - (1-\obslabels)V  \\
        &= (p^{-1} - 1) (\obslabels U)  + \frac{p-1}{p} \obslabels V = \frac{1-p}{p} (\obslabels (U - V)).
    \end{align*}
    As by definition of $m$ it holds that $\abs{U-V} \leq m$, the expectation is bounded by
    \begin{equation}
        \expect[\abs{\pswt(f)(Y, X) - f(Y, X)}]  = \frac{1-p}{p} \expect[\abs{\obslabels (U - V)}] \leq \frac{1-p}{p} \expect[\obslabels m] \leq \frac{1-p}{p} q m.
    \end{equation}
\end{proof}

\thmgenbound*
\begin{proof}
    \label{proof:thmgenbound}
    Let $\epsilon > 0$ and choose $h^{\prime}$ such that
    \begin{equation}wever
        \trisk_{f^*}[h^\prime] \leq r^{\prime} + \epsilon.
    \end{equation}
    From this follows $\hat{r} - r^\prime \leq \trisk_{f^*}[\hat{h}] - \trisk_{f^*}[h^\prime] + \epsilon$ and $\tilde{r} - r^\prime \leq \trisk_{f^*}[\tilde{h}] - \trisk_{f^*}[h^\prime] + \epsilon$.

    We can apply \autoref{thm:cbound-binary} to the function class $\{h^\prime\}$
    using that $\rademacher_n(\{h^\prime = 0\}) = 0$ to get with probability
    $1-\delta/2$
    \begin{equation}
        \oerisk_{f}[h^\prime] - \orisk_f[h^\prime] \leq \frac{(2 - p)(b-a)}{p} \sqrt{\frac{\log(2/\delta)}{2n}}.
    \end{equation}

    For the first inequality, we can use the unbiasedness of $f$, and the near optimality of $\hat{h}$ regarding $\oerisk_f$ to bound
    \begin{align*}
        \trisk_{f^*}[\hat{h}] - \trisk_{f^*}[h^\prime] &= \orisk_f[\hat{h}] - \orisk_f[h^\prime] \tag*{(unbiasedness)}\\
        &= \orisk_f[\hat{h}] - \oerisk_f[\hat{h}] + \oerisk_f[\hat{h}] - \oerisk_f[h^\prime] + \oerisk_f[h^\prime]- \orisk_f[h^\prime] \\
        &\leq \orisk_f[\hat{h}] - \oerisk_f[\hat{h}] + \oerisk_f[h^\prime]- \orisk_f[h^\prime] \tag*{(optimality)}\\
        & \leq \sup_{h \in \mathcal{H}} \left( \orisk_f[h] - \oerisk_f[h] \right) + \oerisk_f[h^\prime]- \orisk_f[h^\prime].
    \end{align*}
    Applying a union bound to the remaining two terms, with probability
    $1-\delta$
    \begin{align}
        \hat{r} - r^\prime &\leq \epsilon + \frac{4-2p}{p} \rademacher(\mathcal{H}) + \frac{(2 - p)(b-a)}{p} \sqrt{\frac{\log(2/\delta)}{2n}} + \frac{(2 - p)(b-a)}{p} \sqrt{\frac{\log(2/\delta)}{2n}} \nonumber \\
        &= \epsilon + \frac{4-2p}{p} \left( \rademacher_n(\mathcal{H}) + (b-a) \sqrt{\frac{\log(2/\delta)}{2n}} \right).
    \end{align}
    With $\epsilon \rightarrow 0$ the first claim follows.

    For $\tilde{h}$ we can decompose the risk difference into
    \begin{equation}
        \trisk_{f^*}[\tilde{h}] - \trisk_{f^*}[h^\prime] = 
        \underbrace{\trisk_{f^*}[\tilde{h}] - \oerisk_{f^*}[\tilde{h}]}_{a} + \underbrace{\oerisk_{f^*}[\tilde{h}] - \oerisk_{f^*}[h^\prime]}_{b} + \underbrace{\oerisk_{f^*}[h^\prime] - \trisk_{f^*}[h^\prime]}_{c}
    \end{equation}
    and look at the contributions separately. Because $\tilde{h}$ is an ERM, we have
    \begin{equation}
        b = \oerisk_{f^*}[\tilde{h}] - \oerisk_{f^*}[h^\prime] \leq 0.
    \end{equation}
    Further, using the unbiasedness and applying \autoref{lemma:lossriskdiff} to the function $(x, y) \mapsto f(y, \tilde{h}(x))$
    \begin{align}
        a = \trisk_{f^*}[\tilde{h}] - \oerisk_{f^*}[\tilde{h}] &= \orisk_{f}[\tilde{h}] - \oerisk_{f^*}[\tilde{h}] \tag*{(unbiasedness)}\\
        &= \orisk_{f}[\tilde{h}] - \orisk_{f^*}[\tilde{h}] + \orisk_{f^*}[\tilde{h}] - \oerisk_{f^*}[\tilde{h}]  \\
        &\leq q \frac{1 - p}{p} m + \orisk_{f^*}[\tilde{h}] - \oerisk_{f^*}[\tilde{h}] \tag*{(\autoref{lemma:lossriskdiff})}\\
        &\leq q \frac{1 - p}{p} m + \sup_{h \in \mathcal{H}} \left( \orisk_{f^*}[h] - \oerisk_{f^*}[h] \right)
    \end{align}
    We can apply \autoref{thm:cbound-binary} with $p=1$ to get corresponding bounds for $f^* = \mathfrak{P}_0 f^*$ so
    that with probability $1-\delta/2$ each
    \begin{align}
        \sup_{h \in \mathcal{H}} \left( \orisk_{f^*}[h] - \oerisk_{f^*}[h] \right) &\leq \frac{4-2}{1} \rademacher_n(\mathcal{H}) + \frac{(2 - 1)(b-a)}{1} \sqrt{\frac{\log(2/\delta)}{2n}}\\
        c = \oerisk_{f^*}[h^\prime] - \trisk_{f^*}[h^\prime] &\leq \frac{(2 - 1)(b-a)}{1} \sqrt{\frac{\log(2/\delta)}{2n}}.
    \end{align}
    Thus, by union bound, with probability $1-\delta$, it holds that
    \begin{equation}
        \trisk_{f^*}[\tilde{h}] - \trisk_{f^*}[h^\prime] \leq q \frac{1 - p}{p} m + 2 \rademacher_n(\mathcal{H}) + 2(b-a) \sqrt{\frac{\log(2/\delta)}{2n}} + \epsilon.
    \end{equation}
    Letting $\epsilon \rightarrow 0$ proves the claim. 
\end{proof}

\subsection{Unbiased Estimates for Multilabel Setting}

\thmmultilabel*
\begin{proof}
    \label{proof:multilabel}
    As in the binary case, we can use the finiteness of $\set{0,1}^l$ to write $f^*$ as
    \begin{align}
        f^*(y, x) &= \sum_{\mathcal{I} \subset [l]} \ind[y=\indvect{\mathcal{I}}] f^*(\indvect{\mathcal{I}}, x) \\
        &= \sum_{\mathcal{I} \subset [l]} f^*(\indvect{\mathcal{I}}, x) \left( \prod_{i \in \mathcal{I}} y_i \right) \prod_{j \in \overline{\mathcal{I}}} \left(1-y_j \right) \label{eq:multilabel:decomp}
    \end{align}
    by rewriting the indicator function as products of $y_i$ and $1-y_j$.

    First, we show the unbiasedness for the expression
    \begin{equation}
        \tilde{f}(\vect{y}, x) \coloneqq \sum_{\mathcal{I} \subset [l]} f^*(\indvect{\mathcal{I}}, x) \left( \prod_{i \in \mathcal{I}} \frac{y_i}{p_i} \right) \prod_{j \in \overline{\mathcal{I}}} \left(1-\frac{y_j}{p_j}\right).
        \label{eq:ps-multilabel-large}
    \end{equation}
    Later, we will show that this is in fact equivalent to \eqref{eq:ps-multilabel-eff}. Using the linearity of the expectation we can explicitly calculate
    \begin{equation}
        \expect[\tilde{f}(\vect{Y}, X)] = \sum_{\mathcal{I} \subset [l]} \expect[f^*(\indvect{\mathcal{I}}, X) \left( \prod_{i \in \mathcal{I}} \frac{M_i Y^*_i}{p_i} \right) \prod_{j \in \overline{\mathcal{I}}} \left(1-\frac{M_j Y^*_j}{p_j}\right) ] \\
    \end{equation}
    For a fixed subset $\mathcal{I}$ we can rewrite
    \begin{equation}
        f^*(\indvect{\mathcal{I}}, X) \left( \prod_{i \in \mathcal{I}} \frac{M_i Y^*_i}{p_i} \right) \prod_{j \in \overline{\mathcal{I}}} \left(1-\frac{M_j Y^*_j}{p_j}\right) = \sum_{\mathcal{J} \subset [l]} f^*(\indvect{\mathcal{I}}, X) \prod_{i \in \mathcal{J}}\alpha_{\mathcal{J}} {M_i Y^*_i}
    \end{equation}
    for appropriately chosen coefficients $\alpha_{\mathcal{J}} \in \real$. Using the independence of $\vect{M}$, it follows that
    \begin{align}
        \MoveEqLeft
        \expect[f^*(\indvect{\mathcal{I}}, X) \prod_{i \in \mathcal{J}}\alpha_{\mathcal{J}} {M_i Y^*_i}] = \expect[\prod_{i \in \mathcal{J}} M_i] \expect[f^*(\indvect{\mathcal{I}}, X) \prod_{j \in \mathcal{J}}\alpha_{\mathcal{J}} {Y^*_j}] \\ &= \left(\prod_{i \in \mathcal{J}} q_i \right) \expect[f^*(\indvect{\mathcal{I}}, X) \prod_{j \in \mathcal{J}}\alpha_{\mathcal{J}} {Y^*_j}] = \expect[f^*(\indvect{\mathcal{I}}, X) \prod_{j \in \mathcal{J}}\alpha_{\mathcal{J}} {p_j Y^*_j}].
    \end{align}
    Therefore, we can replace all occurrences of $M_i$ in the expectation with $p_i$ and compare the result to \eqref{eq:multilabel:decomp}
    \begin{align}
        \expect[\tilde{f}(\vect{Y}, X)] &= \sum_{\mathcal{I} \subset [l]} \expect[f^*(\indvect{\mathcal{I}}, X) \left( \prod_{i \in \mathcal{I}} \frac{p_i Y^*_i}{p_i} \right) \prod_{j \in \overline{\mathcal{I}}} \left(1-\frac{p_j Y^*_j}{p_j}\right) ] \\
        &= \sum_{\mathcal{I} \subset [l]} \expect[f^*(\indvect{\mathcal{I}}, X) \left( \prod_{i \in \mathcal{I}} Y^*_i \right) \prod_{j \in \overline{\mathcal{I}}} \left(1-Y^*_j\right)] \\
         &= \expect[f^*(\vect{Y}^*, X)].
    \end{align}

    Now we show that $\tilde{f} = f$: 
    For any $\mathcal{J} \not\subset \mathcal{I}(\vect{y})$ there is a
    $j \in \mathcal{J}$ such that $y_j = 0$, so the contribution of that summand is zero. Therefore
    \begin{align}
        \tilde{f}(\vect{y}, x) &= \sum_{\mathclap{\mathcal{J} \subset \mathcal{I}(\vect{y})}} f^*(\indvect{\mathcal{J}}, x) \left( \prod_{i \in \mathcal{J}} \frac{1}{p_i} \right) \prod_{j \in \overline{\mathcal{J}}} \left(1-\frac{y_j}{p_j}\right) .
    \shortintertext{Now, for every $j \in \overline{\mathcal{I}(\vect{y})}$ we know that $y_j=0$, so we can simplify further}
    &=  \sum_{\mathclap{\mathcal{J} \subset \mathcal{I}(\vect{y})}} f^*(\indvect{\mathcal{J}}, x) \left( \prod_{i \in \mathcal{J}} \frac{1}{p_i} \right) \prod_{k \in \mathcal{I}(\vect{y}) \setminus \mathcal{J}}\left(1-\frac{1}{p_k} \right).
    \end{align}
    Finally, note that
    \begin{equation}
        \left( \prod_{i \in \mathcal{J}} \frac{1}{p_i} \right) \prod_{k \in \mathcal{I}(\vect{y}) \setminus \mathcal{J}}\left(1-\frac{1}{p_k} \right) = 
         \left( \prod_{i \in \mathcal{J}} \frac{1}{p_i} \left(\frac{p_i - 1}{p_i} \right)^{-1} \right) \prod_{k \in \mathcal{I}(\vect{y})}\left(\frac{p_k-1}{p_k} \right),
    \end{equation}
    which proves the statement.
\end{proof}

\thmmultilabelunique*
\begin{proof}
\label{proof:multi-uniqueness}
    Let $f^*$ be an arbitrary function \defmap{f^*}{\labelspace \times \dataspace}{\real}.
    We need to show that 
    \begin{equation}        
    \tilde{f} \coloneqq \mathfrak{P}(f^*) = \mathfrak{P}_{\vect{p}}(f^*) \eqqcolon f. 
    \end{equation}
    Since \eqref{eq:multi-unique-condition} needs to work for all possible distributions of
    $\datapvar$ and $\vect{Y}^*$, it needs to work in particular also for
    $\prob[\datapvar=x, \vect{\truelabels}=\vect{y}^*] = 1$. 
    Since $\vect{\obslabels}$ can take
    only finitely many states, we can decompose $\mathfrak{P}(f^*)$ into a sum
    over these states. Since $f$ is known to fulfill \eqref{eq:multi-unique-condition}, the equation for
    $\tilde{f}$ becomes
    \begin{equation}
        0 = \expect[\tilde{f}(\vect{\obslabels}, \datapvar) - f(\vect{\obslabels}, \datapvar)] = \sum_{\vect{y} \in 2^{\labelspace}} \expect[(\tilde{f}(x, \vect{y}) - f(x, \vect{y})) \ind[\vect{y} = \obslabels]] = \sum_{\vect{y}^\prime \in 2^{\labelspace}} (\tilde{f}(x, \vect{y}^\prime) - f(x, \vect{y}^\prime)) \prob[\vect{y}^\prime = \obslabels] 
        \label{eq:proof-unqiue:unbiasedness-eq}
    \end{equation}

    Next we show that for all $\vect{y} \in \labelspace$ it holds that
    $\tilde{f}(x, \vect{y}) - f(x, \vect{y}) = 0$ via induction. First, assume
    that $\vect{y}^* = \vect{0}$. Then $\prob[\vect{y}^\prime = \obslabels] =
    \ind[\vect{y}^\prime = \vect{0}]$, and \eqref{eq:proof-unqiue:unbiasedness-eq} simplifies to
    \begin{equation}
        0 = \tilde{f}(x, \vect{0}) - f(x, \vect{0}),
    \end{equation}
    and therefore $\tilde{f}$ and $f$ are equal for this $\vect{y}^*$. 

    Next, assume that $\tilde{f}(\vect{y}^\prime, x) = f(\vect{y}^\prime, x)$ for all $\vect{y}^\prime$
    that have at most $m$ nonzero entries. Let $\vect{y}$ be a vector with $m+1$ nonzero entries, then
    \eqref{eq:proof-unqiue:unbiasedness-eq} can be written as
    \begin{multline}
        0 = \sum_{\smash{\substack{\vspace{2ex}\\\vect{y}^\prime \in 2^{\labelspace} \\ \abs{\vect{y}^\prime} \leq m}}} (\tilde{f}(x, \vect{y}^\prime) - f(x, \vect{y}^\prime)) \prob[\vect{y}^\prime = \obslabels] + (\tilde{f}(x, \vect{y}^*) - f(x, \vect{y}^*)) \prob[\vect{y}^* = \obslabels] \\ = (\tilde{f}(x, \vect{y}^*) - f(x, \vect{y}^*)) \prob[\vect{y}^* = \obslabels].
    \end{multline}
    Here, the first sum vanishes because all the summands are using a vector $\vect{y}^\prime$ with at most $m$ elements. Because we 
    assume that all propensities are nonzero, the $\prob[\vect{y}^* = \obslabels]$ factor is nonzero, which implies that 
    $\tilde{f}(x, \vect{y}^*) - f(x, \vect{y}^*)$ has to be zero.

    Therefore, $\tilde{f}$ and $f$ are identical. Since this holds for all $x \in \dataspace$, the operators $\mathfrak{P}$ and $\mathfrak{P}_{\vect{p}}$ have
    to be identical.
\end{proof}

\section{Experiments}
\subsection{Models used for PsRecall estimation}
\label{ssec:app:pseval}
The networks used to generate the results in \autoref{tab:psrec} were trained
using the DiSMEC \citep{dismec} algorithm, with the loss function being either
the squared-hinge-loss (VN) or a squared-hinge-loss based convex surrogate of
the unbiased estimate of the 0-1 loss as described in \citet{qaraei_convex_2021}
The datasets have been taken from the Extreme Classification Repository
\citep{repo}, and preprocessed by doing a tf-idf transformation.

We can still calculate an unbiased estimate in these cases, at the cost at
even further increase in variance: For these examples, we artificially
generate subsample ground truths where even more labels have gone missing, and
adjust the propensity accordingly. In practice, our implementation divides all
propensities by two and drops each ground truth label with a chance of 50\% if
the ground truth has more than 25 labels. To reduce the increase in variance
slightly, we average 100 subsamplings. This technique can be used recursively
if the resulting subsample still has too many labels.

The final calculation of the PsRecall values as given in \autoref{tab:psrec}
include outlier filtering. For the \texttt{AmazonCat-13K} dataset the
corresponding histograms are shown in \autoref{app:fig:histograms}.


\begin{figure}
    \includegraphics[width=0.5\linewidth]{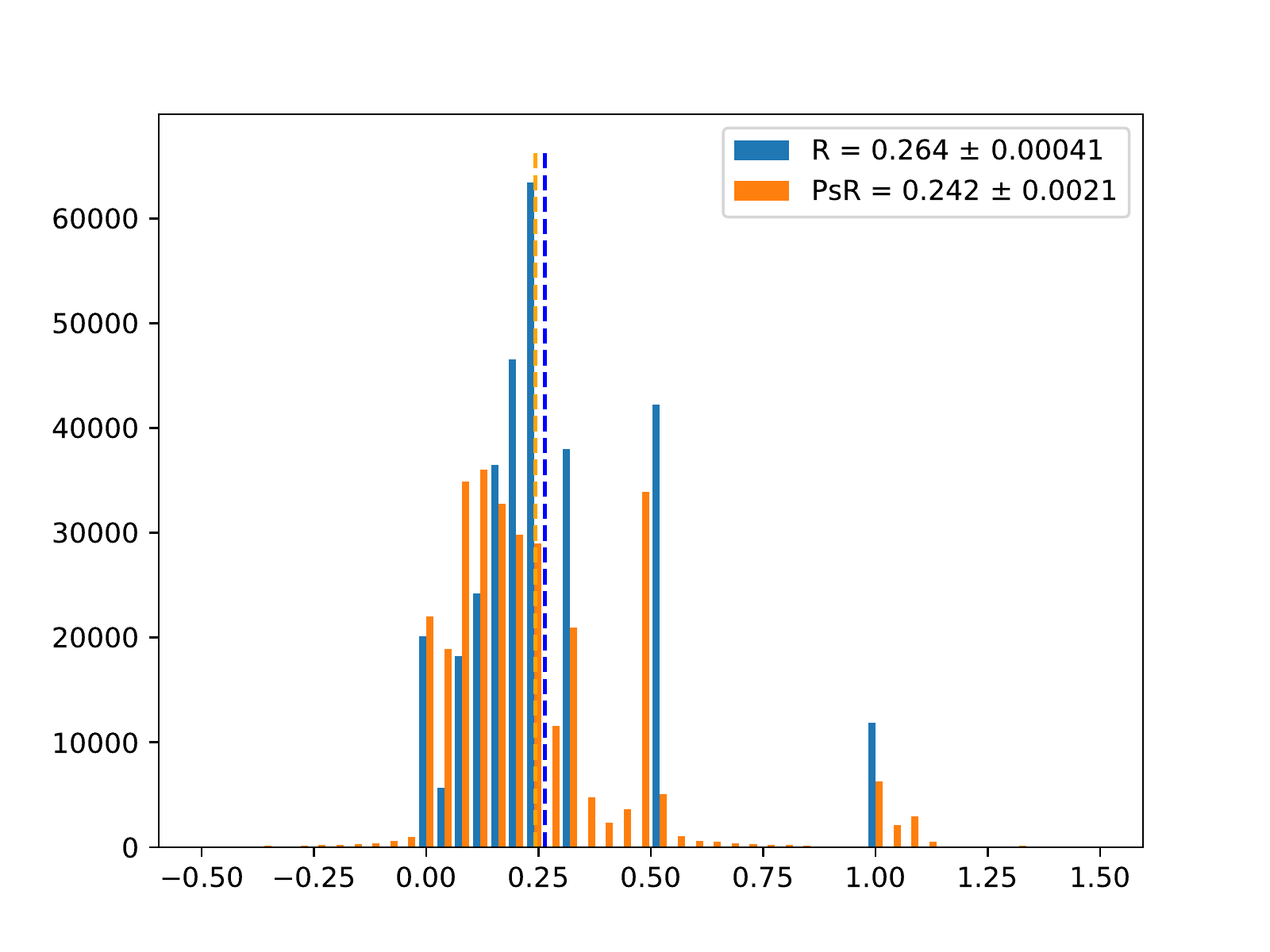}
    \includegraphics[width=0.5\linewidth]{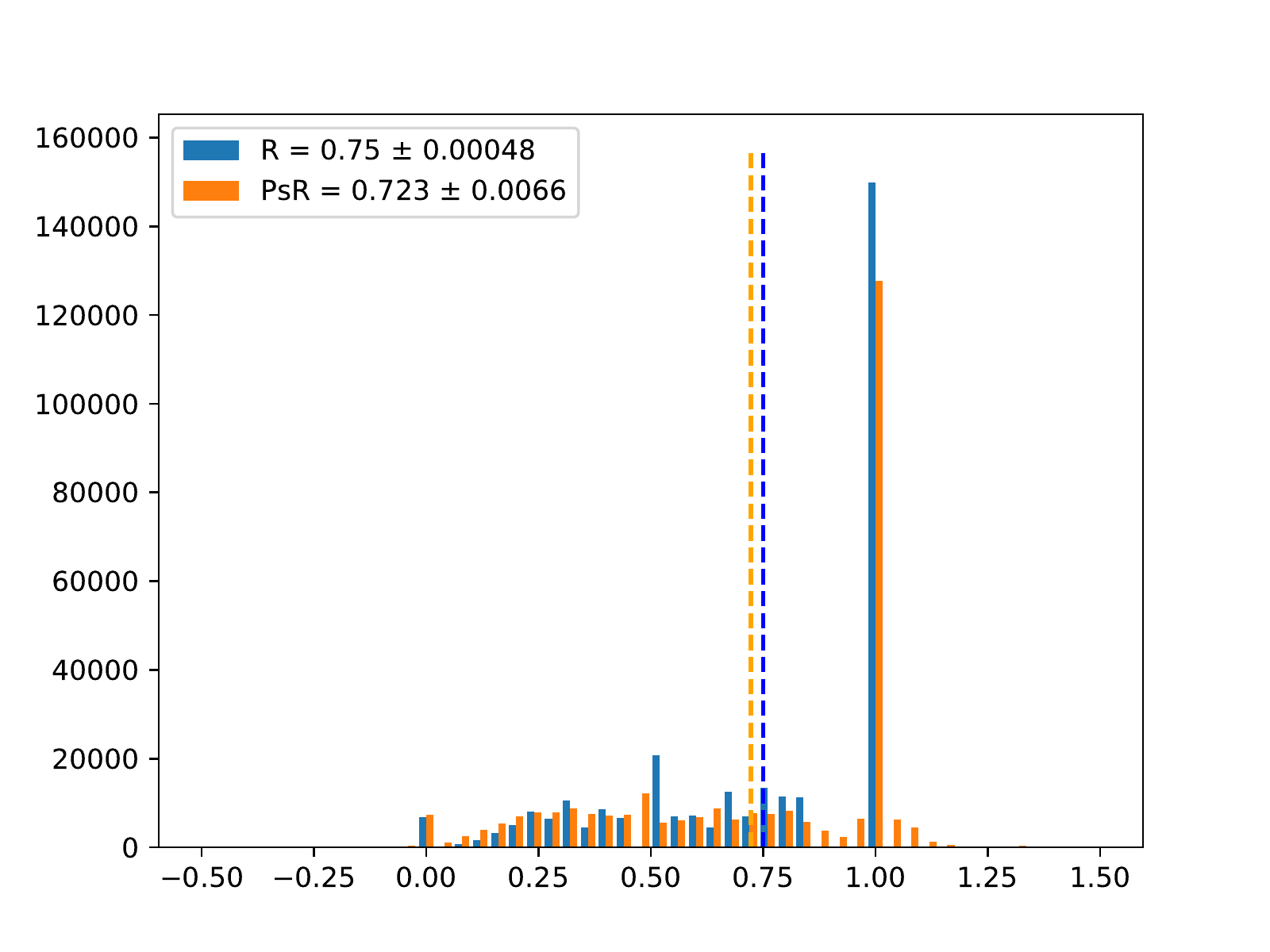}
    \caption{Histograms of propensity-scored (PsR) and vanilla recall (R)
    for top-1 (left) and top-5 (right) predictions for the
    \texttt{AmazonCat-13K} dataset for a DiSMEC model trained with a convex
    surrogate of the propensity-scored 0-1 loss. The y-axis denotes the number
    of examples in the test set for which the estimate falls into the
    corresponding bin. The errors have been calculated by boostrapping a 95 \%
    confidence interval. The dashed vertical lines denote the mean.}
    \label{app:fig:histograms}
\end{figure}


\subsection{Normalized BCE}
\label{seq:supp-norm-bce}
In the main text, we claim that the low performance of unbiased training
with normalized BCE in the high-regularization regime is due to bad local
minima. Here we present supporting evidence.

\begin{figure}[tbp]
    \centering
    \input{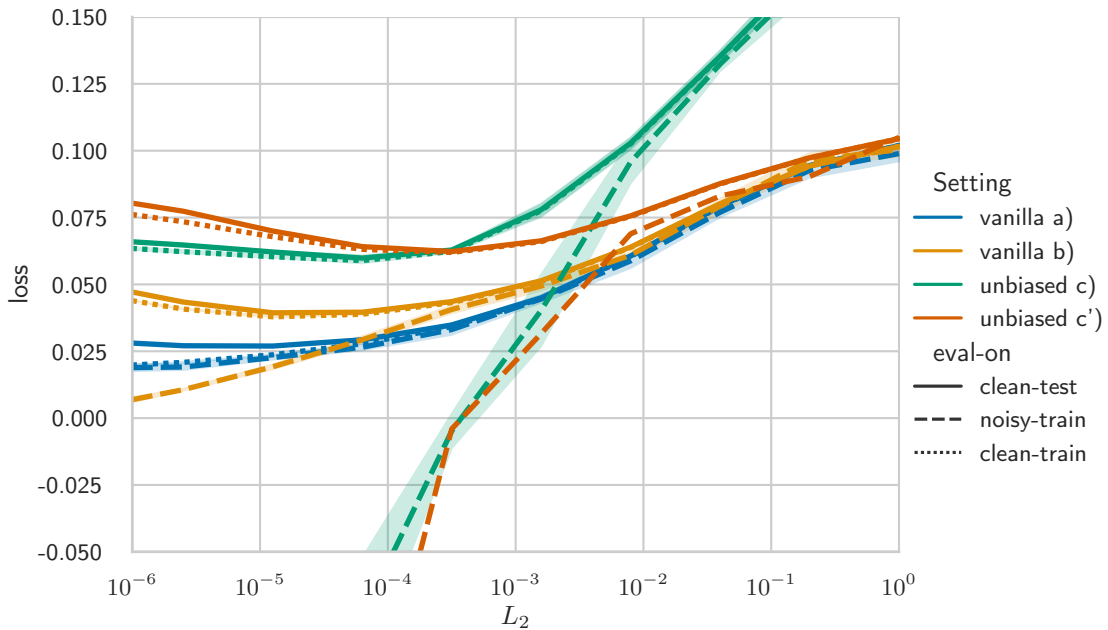}
    \caption{Normalized Binary cross-entropy for different regularization
    strengths, evaluated on noisy training data, clean training data, and
    clean test data. Setting c') corresponds to unbiased training with vanilla
    pre-training.}
    \label{fig:bce-norm-overfitting}
\end{figure}

As shown in \autoref{fig:bce-norm-overfitting}, the low performance is caused
by high loss on training data (dashed lines) as opposed to generalization.
However, for training with vanilla loss b), the loss on noisy training data
(i.e. the unbiased estimate computed on noisy training data) is much lower
than for training with the unbiased loss c), where we directly try to optimize
this quantity. There are two possible explanations for this behaviour: Either
the switch to the unbiased loss shifts the balance of loss to regularizer such
that the regularizer attains much more weight overall, and thus results in
less optimization for the loss, or the optimization procedure gets stuck in a
local minimum. 

These two causes can be distinguished by an experiment where the initial
weights are chosen by pre-training with vanilla loss for ten epochs. If the
reason for the bad performance were the regularization trade-off, then the
following 20 epochs of unbiased training would increase the training loss
until the trade-off is reached. On the other hand, if the reason were local
minima, then starting out close to a known "good" minimum would result in
converging to that minimum.

\autoref{fig:bce-norm-overfitting} exhibits the behaviour of the second case,
thus validating the local-minima hypothesis.

\end{document}